\documentclass[accepted]{uai2026} 

\usepackage[american]{babel}

\usepackage{natbib}
\usepackage{mathtools}
\usepackage{booktabs}
\usepackage{tikz}
\usepackage[utf8]{inputenc}
\usepackage{array}
\usepackage{amsmath}
\usepackage{amssymb}
\usepackage{amsthm}
\usepackage{graphicx}
\usetikzlibrary{positioning,shapes.geometric,arrows.meta,fit,calc}
\usepackage{float}
\usepackage{chngcntr}
\usepackage[ruled,vlined]{algorithm2e}
\usepackage{cleveref}
\usepackage{changepage}
\usepackage{adjustbox}

\usepackage{thmtools}
\usepackage{thm-restate}

\hypersetup{
  colorlinks=true,
  linkcolor=black,
  citecolor=black,
  urlcolor=black
}

\newcommand{\Ch}[1]{\mathrm{Ch}(#1)}
\newcommand{\ChG}[1]{\mathrm{Ch_G}(#1)}
\newcommand{\nCh}[1]{\mathrm{nCh}(#1)}
\newcommand{\Pa}[1]{\mathrm{Pa}(#1)}
\newcommand{\NA}[1]{\mathrm{NA}(#1)}
\newcommand{\EU}[1]{\mathrm{EU}[#1]}
\newcommand{\indep}{\perp \!\!\! \perp}
\DeclarePairedDelimiter\set{\lbrace}{\rbrace}

\newtheorem{definition}{Definition}
\newtheorem{theorem}{Theorem}
\newtheorem{lemma}{Lemma}

\title{A Causal Markov Condition for Value}

\author[1]{Olav Benjamin Vassend}

\affil[1]{%
    University of Inland Norway
}

\begin{document}
\maketitle

\begin{abstract}
This paper proposes a causal independence principle for \emph{value}---the \emph{value Causal Markov Condition} (v-CMC)---and develops the conceptual and mathematical foundations of a ``causal value theory'' linking causality and utility. After motivating a local formulation of the v-CMC, we introduce a probability--value duality that translates standard causal-inference results into the value setting. In particular, we formulate local, global, and decomposition versions of the v-CMC and prove their equivalence. We also define \emph{$v$-separation} and show that it is sound and complete for conditional value independence. Furthermore, we derive a Bellman-type recursion as a special case of the v-CMC, thereby generalizing standard Bellman recursion from linear chains to causal DAGs. Finally, we show how the v-CMC supports modular transfer and updating of utility information across causal contexts and develop algorithms for causally structured utility elicitation and canonical influence-diagram construction.
\end{abstract}

\section{Introduction}\label{sec:intro}

In the foundations of causal inference and decision theory, probability and value or utility occupy strikingly different positions.\footnote{We use ``value'' and ``utility'' interchangeably in this paper.} On the probability side, there is a deep theory relating causal graphs to probability distributions, whose central pillar is the Causal Markov Condition (CMC) \citep{Pearl2009, SpirtesGlymourScheines2000}. According to the CMC, any probability distribution compatible with a directed acyclic graph (DAG) encoding causal relations must respect the conditional independence relations implied by the graph.

On the value side, however, there is no analogous principle. Standard Expected Utility Theory (EUT) treats the utility function as a descriptive representation whose role is to encode an agent’s subjective preferences \citep{AnscombeAumann1963, Savage1954, vonNeumannMorgenstern1944, Jeffrey1983}. A familiar slogan is ``de gustibus non est disputandum'' (there is no arguing about taste). On this view, if an agent’s preferences exhibit complex interdependencies, then a correct value function faithfully captures all this complexity. Contrary to this standard view, the central thesis of this paper is that causal assumptions often can and should guide judgments about utility dependencies. To motivate the idea with a simple example, consider a headache pill. The utility of taking the pill plausibly depends on whether one has a headache, because one believes that the pill alleviates headaches. If one were to discover that the pill has no causal effect, one's utility of taking it would (and should) become independent of whether one has a headache.

Aside from being intuitively plausible, the idea that causal information can guide utility judgments is practically significant. The probabilistic CMC plays a central role in enabling predictions to be transferred across causal systems, since it enables modular decompositions of probability distributions that track causal structure. But decisions depend on both probabilities and utilities. Thus, extrapolating decisions from one context to another requires not only transferring probabilistic information, but also determining which aspects of a utility function should remain stable across contexts. This, in turn, calls for a way of decomposing utility functions modularly in light of causal assumptions. The aim of this paper is to formulate such a principle.

More concretely, we propose the \emph{value Causal Markov Condition} (v-CMC), which relates causal graphs to utility functions, and formulate a causal compatibility criterion for DAG--utility pairs $(G,u)$. We argue that this criterion is a natural normative constraint in many decision-making and learning settings. Furthermore, we argue that there is a conceptual duality in how causal information relates to probability and value: probability flows downstream (from causes to effects), while value flows upstream (from effects to causes). This duality allows us to formulate a translation scheme from causal-inference results to the value setting. Using it, we (i) define local, global, and decomposition versions of the v-CMC and prove they are equivalent; (ii) define $v$-separation and show it is sound and complete for conditional value independence; and (iii) derive the Bellman recursion \citep{Bellman1957} as a special case, thereby clarifying when Bellman-style decomposition holds and yielding a principled elicitation procedure over causal DAGs. Importantly, we show how the resulting decomposition identifies which local utility terms must be revised, and which can be transferred unchanged, as causal assumptions change. Finally, we propose an algorithm for automated influence diagram construction.

\subsection{Related work}
Graphs for representing preference dependencies have been studied in Multi-Attribute Utility Theory (MAUT) \citep{KeeneyRaiffa1993,Fishburn1974} and graphical extensions such as CP-nets \citep{BoutilierEtAl2004}, UCP-nets \citep{BoutilierBacchusBrafman2001}, and GAI networks \citep{BacchusGrove1995}. Related formalisms---such as Expected Utility Networks and conditional-utility-independence (CUI) networks---also study graphical representations of utility (or expected-utility) independencies \citep{LaMuraShoham1999,EngelWellman2008,LeonelliSmith2017}. Most directly, \cite{BrafmanEngel2009, BrafmanEngel2010} introduce the subtractive notion of conditional value used here and show how it supports DAG-based value decompositions. These decompositions differ from ours in three key ways. First, in the existing literature DAGs encode \emph{preference} rather than \emph{causal} structure, aiming at a modular representation of whatever dependencies an agent happens to have. The v-CMC is complementary: it specifies which value independencies an agent’s preferences \emph{should} exhibit to be compatible with causal information. Second, whereas existing (directed) graphical utility decompositions rely on parent-based ``screening-off'' criteria, the v-CMC maintains that children (in a causal graph) perform the screening-off role. Third, and crucially, the v-CMC provides a principled method for \emph{updating} utility functions modularly and locally as our causal assumptions change. Finally, although the v-CMC is complementary to, and provides a causal basis for, MAUT elicitation (as we explain in Section \ref{Applications}), its treatment and development here are intentionally framework-neutral: the v-CMC is compatible with utility frameworks other than MAUT, some of which we mention later.

A closely related tradition is influence diagrams (IDs) \citep{HowardMatheson2005, Shachter1986, Everitt2021, everitt2021b}, where diamond-shaped value nodes encode which variables contribute to overall value, often assuming an additive decomposition for convenience. Our framework provides a more principled basis for these choices in that the local v-CMC guides which value dependencies should (or should not) be represented, and the decomposition v-CMC specifies when additive decomposition of value is warranted. Indeed, in Section \ref{Applications}, we propose an algorithm that uses the v-CMC to automatically construct influence diagrams that respect causal information. In sum, the v-CMC provides a common causal framework that can be applied across Multi-Attribute Utility Theory, reinforcement learning, and influence diagrams.

\section{Setup, notation, and basic assumptions}\label{sec:basics}

Let $G=(V,E)$ be a causal DAG with vertices $V=\{X_1,\ldots,X_n\}$. We use uppercase letters such as $X_i$ for variables and lowercase letters such as $x_i$ for their realizations. When no confusion can arise, however, we will often write $X_i$ rather than $x_i$ for a particular realized value of $X_i$. For any $X_i\in V$, let $\Pa{X_i}$ denote the parents of $X_i$ in $G$, and let $\Ch{X_i}$ denote its children. Let $\NA{X_i}$ denote the set of nodes that are \emph{not} ancestors of $X_i$, excluding $X_i$ itself and its children $\Ch{X_i}$.

We assume that we have, or are interested in constructing, a utility function $u$ associated with the variables in $G$. Our treatment is framework-neutral: $u$ may be interpreted in different ways depending on the decision-theoretic framework, and nothing in the formal development below depends on choosing one such interpretation. What matters for our purposes is only the domain and scale of $u$.

Let $\mathcal{P}(V)$ be the power set of $V$, i.e., the set of all subsets of $V$. The utility function need not be defined on every subset of variables in $V$. Rather, we assume that there is some collection $S\subseteq\mathcal{P}(V)$ of variable sets for which utilities are defined. For example, $S$ might contain all subsets of $V$, or only those subsets relevant to a particular decision problem. For each $s\in S$, $u$ assigns a real number to each joint realization of the variables in $s$. We also assume that all these utilities are measured on a common scale, unique up to positive affine transformations. Thus we define:

\begin{definition}[Utility function]\label{value_function}
Let $G=(V,E)$ be a causal DAG, and let $S\subseteq\mathcal{P}(V)$ be a collection of subsets of $V$. For each $s\in S$, let $\mathcal{X}(s)$ be the set of joint realizations of the variables in $s$. A utility function over $S$ is a function

\begin{equation}
u:\ \bigcup_{s\in S}\bigl(\{s\}\times \mathcal{X}(s)\bigr)\ \to\ \mathbb{R},
\end{equation}

unique up to positive affine transformations on a common scale across all $s \in S$.
\end{definition}

The domain in Definition~\ref{value_function} may look needlessly cumbersome. The explicit $\{s\}$-tag in the domain keeps track of which variables are included in the utility assessment. For example, evaluating $X=x$ alone can be different from evaluating $X=x$ together with $Y=y$. Thus $u(s,x)$ records both the variable set $s$ being evaluated and the realization $x$ of that set. If $s\in S$ and $x\in\mathcal{X}(s)$, we write $u(x)$ rather than $u(s,x)$ whenever $s$ is clear from context. When no confusion can arise, we also write $u(X)$ for the utility of a realization of the variable or variable set $X$.

The probabilistic CMC is stated in terms of conditional probabilities. To formulate an analogous principle for value, we therefore need a notion of conditional value. Following \cite{BrafmanEngel2009, BrafmanEngel2010, Bradley2017}, we define the value of $x$ conditional on $y$ as the additional value contributed by $x$ once $y$ is already given:

\begin{equation}\label{conditional_value}
u(x \mid y) := u(x, y) - u(y).
\end{equation}

Equivalently, $u(x,y)=u(y)+u(x\mid y)$. Thus, joint value can always be written as the value of $y$ plus the conditional contribution of $x$ given $y$. Note that this definition does not assume additive separability. Additive separability is obtained only in the special case where $u(x\mid y)=u(x)$, in which case $u(x,y)=u(x)+u(y)$. Note also that the subtractive definition generalizes the familiar notion of marginal utility. If the underlying variable $y$ is continuous and $x$ is interpreted as a small increment $\Delta y$, then $u(x\mid y) = u(y+\Delta y)-u(y) \approx u'(y)\Delta y$

This definition is provided in terms of variables $x$ and $y$, but it is straightforward to extend it to sets of variables. For disjoint sets of variables $S,S'$ and realizations $s,s'$, define:

\begin{equation}\label{conditional_value2}
u(s \mid s^{\prime}) := u(s \cup s^{\prime}) - u(s^{\prime}).
\end{equation}

By convention, $u(s\mid \emptyset):=u(s)$ (so that $u(\emptyset)=0$). This yields a natural notion of conditional value independence \citep{BrafmanEngel2010}:

\begin{definition}[Conditional value independence]\label{conditional_independence}
Let $A$, $B$, and $C$ be disjoint sets of variables. If for all realizations $a,b,c$ we have $u(a \mid b,c)=u(a \mid c)$, then we write $(A \indep_u B \mid C)$, and say that $A$ is value independent of $B$ given $C$.
\end{definition}

Finally, we impose a weak consistency constraint:

\begin{definition}[Conditional consistency]\label{conditional_consistency}
Let $A$, $B$, $B'$, and $C$ be sets of variables with $B' \subseteq B$. Then
\[
(A \indep_u B \mid C) \;\Rightarrow\; (A \indep_u B' \mid C).
\]
\end{definition}

Conditional consistency says that if conditioning on $C$ renders an entire collection of variables $B$ irrelevant to the conditional value of $A$, then it should also render every subset of $B$ irrelevant. If conditional consistency were false, then the whole of $B$ could carry no value-relevant information about $A$, while some part of $B$ nevertheless does. This is arguably inconsistent with the intended meaning of screening off. Standard decision theoretic frameworks like the ones we discuss below automatically satisfy conditional consistency, but in our framework-neutral setup, we must impose it as an additional constraint. Conditional consistency ensures that conditional value independence is a semi-graphoid in the sense of \cite{PearlPaz1987} (proof in appendix):

\begin{restatable}[Conditional value independence obeys semi-graphoid axioms]{theorem}{SemiGraphoidTheorem}
\label{thm:SemiGraphoidTheorem}
Let $u$ be a value function defined as in (\ref{value_function}), let conditional value be defined as in (\ref{conditional_value2}), and let conditional value independence be defined as in Definition \ref{conditional_independence}. If conditional value independence obeys conditional consistency (Definition \ref{conditional_consistency}), then it obeys all the semi-graphoid axioms.
\end{restatable}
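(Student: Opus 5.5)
The plan is to verify the four semi-graphoid axioms of \cite{PearlPaz1987} one at a time, for disjoint variable sets $X,Y,Z,W$: symmetry, decomposition, weak union, and contraction. Each will be checked directly from the subtractive definition (\ref{conditional_value2}) of conditional value. Throughout, I will use the identity
\[
(A \indep_u B \mid C) \iff u(a,b,c) - u(b,c) = u(a,c) - u(c) \quad \text{for all } a,b,c,
\]
which follows immediately from Definition~\ref{conditional_independence}. I will also rely on the convention $u(\emptyset)=0$.

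\emph{Symmetry.} Rearrange the displayed identity as $u(a,b,c)-u(a,c)=u(b,c)-u(c)$. This says exactly that $u(b\mid a,c)=u(b\mid c)$, so $(X\indep_u Y\mid Z)$ gives $(Y\indep_u X\mid Z)$. This step is pure algebra and does not need conditional consistency.

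\emph{Decomposition.} The goal is $(X\indep_u Y\cup W\mid Z)\Rightarrow(X\indep_u Y\mid Z)$. With $B=Y\cup W$ and $B'=Y$, this is precisely Definition~\ref{conditional_consistency}, so it holds by assumption.

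\emph{Weak union.} The goal is $(X\indep_u Y\cup W\mid Z)\Rightarrow(X\indep_u Y\mid Z\cup W)$. By decomposition we already have $u(x\mid w,z)=u(x\mid z)$. The hypothesis gives $u(x\mid y,w,z)=u(x\mid z)$. Combining the two yields $u(x\mid y,w,z)=u(x\mid w,z)$, which is the conclusion.

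\emph{Contraction.} The goal is $(X\indep_u Y\mid Z)\wedge(X\indep_u W\mid Z\cup Y)\Rightarrow(X\indep_u Y\cup W\mid Z)$. The second hypothesis gives $u(x\mid w,y,z)=u(x\mid y,z)$. The first gives $u(x\mid y,z)=u(x\mid z)$. Chaining them gives $u(x\mid y,w,z)=u(x\mid z)$, which is what is required. If symmetric variants of these axioms are wanted, they follow from symmetry.

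The only real subtlety concerns the domain. The equalities involve terms such as $u(y,w,z)$, which require the relevant variable sets to lie in $S$. I will assume, as implicit in Definition~\ref{conditional_independence}, that independence statements are asserted only when every needed set is in $S$, or else that $S$ is closed under the unions and subsets that appear. With that caveat, every axiom except decomposition is an algebraic identity. Decomposition is the one step that genuinely needs the additional assumption, which is why conditional consistency appears in the hypothesis of Theorem~\ref{thm:SemiGraphoidTheorem}.
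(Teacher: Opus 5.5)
Your proposal is correct and follows the paper's proof essentially step for step. Symmetry comes from rearranging the subtractive identity, decomposition directly from conditional consistency, weak union from combining the hypothesis with the already-derived $(X \indep_u W \mid Z)$, and contraction from chaining the two equalities; your caveat that the relevant variable sets must lie in $S$ is a reasonable clarification that the paper leaves implicit.
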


The discussion up to now has been rather abstract, and some assumptions may appear strong. However, several existing theories in decision theory and reinforcement learning are in fact instantiations of the above framework:

\textbf{A.} In the Jeffrey--Bolker--Bradley framework \citep{Bolker1967, Jeffrey1965, Bradley2017}, utility (or desirability) is defined on a full algebra of propositions, i.e., on the full power set $\mathcal{P}(V)$. \cite{Bradley2017} defines conditional utility as in (\ref{conditional_value}) and gives a money pump argument for why this definition is uniquely correct.

\textbf{B.} In Multi-Attribute Utility Theory (MAUT) \citep{Fishburn1974, KeeneyRaiffa1993}, one specifies local von~Neumann--Morgenstern utility functions on subsets $V' \subseteq V$ and calibrates them to a common scale; in this tradition, \cite{BrafmanEngel2010} propose the subtractive definition (\ref{conditional_value}) and the conditional independence relation (Definition \ref{conditional_independence}), and show that this relation is semi-graphoid.

\textbf{C.} In reinforcement learning (RL), the value function is defined on nested subsets $V_i$, yielding a Bellman recursion $u(V_i)=r(v_i)+u(V_{i-1})$. The advantage function $Q(a,s)-V(s)$ can be viewed as a special case of the subtractive definition (Definition \ref{conditional_value}) under the obvious identification $Q(a, s) = u(a, s)$ and $V(s) = u(s)$. See Section \ref{Applications} for more discussion of the relationship to RL.

Thus, our definition of conditional value (Definition \ref{conditional_value}) is not arbitrary, and can be justified in several different frameworks. Here we give an additional, framework-neutral justification. Since $u$ is assumed unique only up to positive affine transformations, any definition of conditional value must be affine invariant. The following result shows that, under plausible conditions, the subtractive definition is the only definition of conditional value that is affine invariant.

\begin{restatable}[Affine Invariance of Conditional Value]{theorem}{AffineInvarianceTheorem}\label{thm:AffineInvarianceTheorem}
Assume that $u(x, y)$ and $u(y)$ are utility functions that are unique up to arbitrary positive affine transformations. Assume, furthermore, that conditional utility is a function $f$ of $u(x, y)$ and $u(y)$, i.e.,  $u(x \mid y) = f(u(x, y), u(y))$, such that $f$ is differentiable at $(0, 0)$ and $f$ is increasing in $u(x, y)$ when holding fixed $u(y)$. Finally, suppose $f$ is invariant to all positive affine transformations of the utility scale, so that, for all $a > 0, b \in \mathbb{R}$:
\begin{equation}
    f(a J + b, a M + b) = a f(J, M)
\end{equation}

where $J = u(x,y)$ and $M = u(y)$. Then $f$ must be the arithmetic difference: $f(J, M) = k(J - M)$, where $k$ is a scaling factor that can be set to 1 without loss of generality.
\end{restatable}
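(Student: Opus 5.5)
The plan is to use translation invariance to reduce $f$ to a function of one variable, and then scale invariance together with differentiability at the origin to force that function to be linear.

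\emph{Step 1 (translation).} Setting $a=1$ in the invariance identity gives $f(J+b,M+b)=f(J,M)$ for every $b\in\mathbb{R}$. Choosing $b=-M$ yields $f(J,M)=f(J-M,0)$. Define $g(t):=f(t,0)$; then $f(J,M)=g(J-M)$ for all $J,M$. So $f$ depends only on the difference $J-M$.

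\emph{Step 2 (scaling).} Setting $b=0$ gives $f(aJ,aM)=af(J,M)$, that is, $g(at)=a\,g(t)$ for all $a>0$ and all $t\in\mathbb{R}$. Putting $t=0$ gives $g(0)=a\,g(0)$ for every $a>0$, hence $g(0)=0$. For $t>0$, write $t=t\cdot 1$ to get $g(t)=t\,g(1)$. For $t<0$, write $t=|t|\cdot(-1)$ to get $g(t)=|t|\,g(-1)=-t\,g(-1)$. Thus $g$ is linear on each half-line, with possibly different slopes $k_+:=g(1)$ and $k_-:=-g(-1)$.

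\emph{Step 3 (differentiability, the main point).} Scaling alone permits the kinked function $g(t)=k_+t$ for $t\ge 0$ and $g(t)=k_-t$ for $t<0$. This is where the differentiability hypothesis is needed. Since $f(J,M)=g(J-M)$, differentiability of $f$ at $(0,0)$ implies that $t\mapsto f(t,0)=g(t)$ is differentiable at $0$. The right derivative of $g$ at $0$ is $k_+$ and the left derivative is $k_-$, so differentiability forces $k_+=k_-=:k$. Hence $g(t)=kt$, and therefore $f(J,M)=k(J-M)$.

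\emph{Step 4 (sign and normalization).} Monotonicity of $f$ in $J$ for fixed $M$ gives $k>0$; at minimum it excludes $k<0$, and if ``increasing'' is read strictly it also excludes $k=0$. A common positive factor $k$ can be absorbed into the utility scale, since $u$ is only unique up to positive affine transformations. So we may set $k=1$ without loss of generality, which gives $u(x\mid y)=u(x,y)-u(y)$, the subtractive definition (\ref{conditional_value}).

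The only nonroutine step is Step 3. Homogeneity gives linearity only separately on each side of $0$, and the differentiability assumption is exactly what rules out a kink at the origin.
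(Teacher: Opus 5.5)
Your proposal is correct and follows the paper's proof essentially step for step. Both use translation invariance with $b=-M$ to reduce to $g(J-M)$, positive homogeneity to get slopes $k_+$ and $k_-$ on the two half-lines, differentiability at the origin to make them equal, and monotonicity to get $k>0$; your explicit derivation of $g(0)=0$ and your note that a strict reading of ``increasing'' is needed to exclude $k=0$ only tighten details the paper leaves implicit.
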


\section{Statement and explanation of the (local) v-CMC}\label{sec:V_CMC_statement}

We are now in a position to state the local version of the v-CMC, which states that a node's causal \emph{children} screen off our evaluation of the value of the node from the node's non-ancestors:

\begin{definition}[v-CMC (local formulation)]
Let $G$ be a causal DAG with vertices $V = {X_1, \ldots, X_n}$. Let $u$ be a utility function defined on some set of subsets $S \subset \mathcal{P}(V)$. Then, for any set of non-ancestors $N \subset \NA{X_i}$:
\begin{equation*}
X_i \indep_u N \mid \Ch{X_i}
\end{equation*}
whenever $u$ is defined on the relevant subsets.
\end{definition}

Since the v-CMC only applies to subsets on which $u$ is defined, we will omit this qualification henceforth. However, all our results depend on the assumption that $u$ is defined (or can be constructed) on the relevant subsets of interest. 

The probabilistic CMC holds only for DAGs that are ``causally sufficient'' in the sense that, for any two nodes in the graph, any common cause of the nodes is also included. We need a similar condition in order to formulate our compatibility criterion:

\begin{definition}[Value sufficiency]\label{value_sufficiency}
Let $G$ be a causal DAG with vertices $V = {X_1, \ldots, X_n}$. Then $G$ is \emph{value sufficient} iff, for any two variables $X_i$ and $X_j$, all shared causal effects of $X_i$ and $X_j$ are included in $G$.
\end{definition}

Value sufficiency is the exact dual of the classic definition of causal sufficiency given by \cite{SpirtesGlymourScheines2000}. It is arguably a principle that all decision-making implicitly assumes, whether or not it relies on an explicit causal principle such as the v-CMC. For example, you might judge that the utility of drinking orange juice is independent of the medication you are taking because you assume they are causally independent. However, if you learn (e.g., from medical evidence) that orange juice and the medication causally interact to produce a nasty side effect (an omitted joint causal effect), your utilities will change, as they should.

Thus, in practice, value sufficiency is a tentative modeling assumption on which all decision making relies, and its adequacy depends on empirical and scientific facts about the underlying causal system. Typically, one will begin with a set of variables deemed relevant to the decision problem; value sufficiency then requires drawing arrows from any two variables into any shared colliders already in this set. If it turns out---on the basis of improved causal knowledge or revised modeling assumptions---that an important common effect has been omitted from the original variable set, both common sense and value sufficiency require adding it to the set and extending the utility function over the new variable. A key benefit of the v-CMC is that it enables this sort of extension to be carried out in a modular and local way, as we will see below. With these preliminaries in place, we can now formulate our causal compatibility criterion:

\begin{definition}[Compatibility criterion]\label{Compatibility}
Let $G$ be a causal DAG with vertices $V = {X_1, \ldots, X_n}$. Let $u$ be a utility function defined on some set of subsets $S \subset \mathcal{P}(V)$. If $G$ is value sufficient, then the pair $(G, u)$ is \emph{compatible} only if $u$ obeys the v-CMC with respect to the subsets of $G$ on which it is defined.
\end{definition}

Our contention is that rational decision-making should often be based on pairs $(G,u)$ that are compatible in this sense. There are multiple arguments for this claim, which will be detailed below: first, it arguably accords with reasonable judgments in concrete cases; second, it follows from widely applicable premises; third, as we will see in Section \ref{Applications}, the v-CMC and Definition \ref{Compatibility} provide a natural foundation and generalization of methods already widely applied in decision theory and reinforcement learning. We recognize that the v-CMC is likely to raise questions, and have therefore included a set of clarificatory notes in the appendix. However, two remarks are worth making already here.

First, the v-CMC, like its probabilistic dual, concerns variables that can stand in causal relationships and can therefore be meaningfully represented in a causal DAG. Many utility dependencies and independencies thus fall outside its scope. For example, it is natural to think that the disvalue of lying may depend on broader ethical considerations, such as whether Utilitarianism is true. However, an abstract proposition such as ``Utilitarianism is true'' is plausibly not a variable that can be meaningfully represented in a causal DAG. Thus, while such commitments may matter to an agent’s evaluative outlook, they fall outside the scope of a principle like the v-CMC.

Second, the v-CMC says that the children of a variable screen off its non-ancestors; it does not say that its \emph{ancestors} are screened off. A strict consequentialist would probably argue that causal ancestors should also be screened off, since once some state $B$ has occurred, its history should be irrelevant to how we assess its value, but the v-CMC does not make this demand. The v-CMC is therefore weaker than it may at first look: it does not rule out letting the causal history of a state influence how we value that state.

\section{Applications and justifications of the Compatibility criterion}\label{sec:justifications}

We begin with two simple applications of the v-CMC, to illustrate how it delivers reasonable verdicts in particular cases.

\begin{figure}[ht]
\centering
\begin{adjustbox}{width=\linewidth}
\begin{tikzpicture}[
  x=1cm,y=1cm,
  >=Latex,
  font=\small,
  node distance=10mm,
  var/.style={draw,rounded corners,align=center,minimum width=22mm,minimum height=6mm,inner sep=2.5pt},
  choice/.style={var,very thick,fill=black!10},
  child/.style={var,fill=blue!10},
  exogen/.style={var,fill=gray!15}
]

\node[choice] (A) at (0,0) {Antibiotic \(A\)\\(give vs.\ withhold)};

\node[child]  (S) at (3.4, 1.0) {Symptom relief \(S\)};
\node[child]  (E) at (3.4,-1.0) {Adverse reaction \(E\)};

\node[var]    (T) at (6.8,0)     {Length of stay \(T\)};
\node[var]    (W) at (9.8,0)     {Patient welfare \(W\)};

\node[exogen] (D) at (-2.4, 1.2) {Diagnostics \(D\)};
\node[exogen] (I) at ( 1.2, 2.2) {Infection severity \(I\)};
\node[exogen] (R) at ( 1.2,-2.2) {Risk factors \(R\)};

\draw[->] (D) -- (A);

\draw[->] (A) -- (S);
\draw[->] (A) -- (E);

\draw[->] (I) -- (S);
\draw[->] (R) -- (E);

\draw[->] (S) -- (T);
\draw[->] (E) -- (T);
\draw[->] (T) -- (W);
\draw[->] (S) -- (W);
\draw[->] (E) -- (W);

\node[draw,dashed,rounded corners,fit=(S)(E),inner sep=3pt,label=below:{\footnotesize children of \(A\)}] (childbox) {};
\end{tikzpicture}
\end{adjustbox}
\caption{Antibiotic example: $A$ affects $S,E$, which affect outcomes $T,W$; $D,I,R$ are exogenous.}
\label{figure1}
\end{figure}

Consider $A$ in Figure \ref{figure1}, and suppose the graph is value sufficient. Since $A$'s causal children are $S$ and $E$, the v-CMC implies $u(A \mid S,E,R)=u(A \mid S,E)$. That is, once we condition on symptom relief and an adverse reaction, the risk factor $R$ provides no further information relevant to assessing the utility of administering the antibiotic. This seems reasonable: if we assume the subject \emph{will} have an adverse reaction, it does not matter whether they have $R$; likewise, if we assume they \emph{will not} have an adverse reaction, $R$ again makes no difference. Hence $R$ is \emph{value irrelevant} to administering the antibiotic once we condition on $A$'s children, as entailed by the local v-CMC.

A dynamic example sharpens the point. In Figure \ref{figure1} it is reasonable that $A$ and $R$ are value dependent---e.g.\ $u(A{=}1\mid R{=}1)<u(A{=}1\mid R{=}0)$; the v-CMC permits, but does not force, such dependence. Suppose, however, that we discover $R$ is not in fact a cause of $E$, while the remainder of the DAG is correct. In this modified graph, the v-CMC entails that $A$ and $R$ are value independent. This again seems appropriate: if $R$ has no causal pathway to any variable affected by $A$, then $R$ is \emph{causally isolated} and should not influence whether to administer the antibiotic. Figure \ref{figure1} also clarifies why value sufficiency is necessary for v-CMC: if $E$ were erroneously omitted, the graph would be value insufficient since $R$ and $A$ jointly cause $E$. In that case, $R$ would still be relevant to the value of administering the antibiotic, $u(A=1\mid R=1)\neq u(A=1\mid R=0)$; but because the only causal pathway making $R$ relevant to $A$ (namely via $E$) would be omitted, the causal children of $A$ in the value insufficient graph would fail to screen off $R$ from $A$.

We now give a more general defense of the Compatibility criterion in Definition \ref{Compatibility} from three widely applicable assumptions. To motivate the first, note that the value of a node depends (in part) on its causally downstream effects: value “propagates upwards” from effects to causes, since a cause is more valuable insofar as it produces better effects. In a value-sufficient DAG, every causal path from a node $W$ to any descendant set $D'$ must pass through its children $\Ch{W}$, so $\Ch{W}$ is a bottleneck: holding $\Ch{W}$ fixed blocks every causal route by which $W$ could affect $D'$. If the value of $W$ still changes when one then learns downstream details $D'$, one is effectively treating $D'$ as an additional channel of influence not represented in the graph. Thus, if the DAG is value sufficient, it is reasonable to require:

\begin{restatable}[Child mediation]{definition}{ChildMediation}\label{def:ChildMediation}
Let $G = (V, E)$ be a causal DAG and let $u$ be a value function defined on $S \subseteq \mathcal{P}(V)$. Then $u$ is \emph{child-mediated} with respect to $G$ iff, for every node $W$ and for every set $D'$ of causal descendants of $W$ that are not children of $W$,
\begin{equation}\label{eq:child_mediation}
W \indep_u D' \mid \Ch{W}.
\end{equation}
\end{restatable}

In other words---recalling our definition of conditional utility---$u$ satisfies child mediation iff, to evaluate the conditional value contribution of $W$ over and above its causal effects $D'$, it suffices to consider the conditional value contribution of $W$ over and above its most immediate effects $\Ch{W}$. This does not mean that non-child descendants are irrelevant to total utility. Indeed, below we define a decomposition version of the v-CMC, which specifies how total utility decomposes over a causal DAG. Child mediation says only that, to determine the incremental contribution of $W$ to total value, it suffices to consider the utility of $W$ conditional on its children. This is reasonable in a value-sufficient DAG, since all of $W$'s causal effects must pass through $\Ch{W}$. 

This point is analogous to the distinction, familiar from the probabilistic CMC, between unconditional relevance and conditional screening-off. The local probabilistic CMC does not say that a node's non-descendants are irrelevant to its unconditional probability; it says that they are irrelevant to its conditional probability once the node's parents are fixed. Similarly, child mediation concerns conditional utility rather than total utility or the expected utility of an action---distal descendants can affect both of the latter even if they do not affect conditional utility.

In Figure~\ref{figure1}, for example, child mediation says that the utility of administering the antibiotic $A$, \emph{conditional} on its children $S$ and $E$, does not depend on further causal descendants such as $T$. But $T$ may still be relevant to the \emph{expected utility} of administering $A$, since the probability of $T$ may depend on $A$ through $S$ and $E$. For more on this point, see Section~\ref{sec:clarifications} of the Appendix.

The second and third assumptions draw on Causal Decision Theory (CDT) \citep{Lewis1981, Joyce1999, GibbardHarper1978, Pearl2009, Stern2017}, whose basic premise is that decisions should be based on the causal effects of the acts under consideration. Thus, the value of an intervention should not depend on causally irrelevant factors (i.e., non-descendants), a sensible requirement in many economic, political, and medical decision problems (as in Figure \ref{figure1}). 

Let $G_{do(W)}$ be obtained from $G$ by severing all arrows into $W$. Let $u_{do(W)}$ be the (possibly updated) utility function associated with $G_{do(W)}$. As we will see below (Section \ref{Applications}), $u_{do(W)}$ will sometimes differ from $u$, and the v-CMC gives a principled method for how to discover these differences. However, a basic principle consistent with CDT is that if $G'$ is a subgraph of $G$ that is not affected by the intervention, then $u_{do(W)}$ should be identical to $u$ on $G'$. More precisely, we assume: 

\begin{restatable}[Utility modularity]{definition}{Modularity}\label{def:Modularity}
If $V'\subseteq V$ is such that the induced subgraph on $V'$ is identical in $G$ and $G_{do(W)}$, i.e.
$G[V']=G_{do(W)}[V']$, then for every $S\subseteq V'$ on which both are defined, $u_{do(W)}(S)=u(S)$.
\end{restatable}

Finally, we require:

\begin{restatable}[CDT property]{definition}{CDTProperty}\label{def:CDTProperty}
Let $G = (V, E)$ be a causal DAG and let $u$ be a value function defined on some set of subsets $S \subseteq \mathcal{P}(V)$. Let $W \in V$, and let $G_{do(W)}$ and $u_{do(W)}$ be defined as above. Let $D$ be the set of all descendants of $W$ in $G_{do(W)}$ and let $ND$ be the set of non-descendants of $W$ in $G_{do(W)}$. Then $u$ has the \emph{CDT property} with respect to $G$ iff for any $ND' \subset ND$:
\begin{equation}\label{eq:cdt_property}
W \indep_{u_{do(W)}} ND' \mid D.
\end{equation}
\end{restatable}

Although the CDT property may look formally complicated, it simply requires that the utility of an intervention be assessed on the basis of its causal effects alone, with other variables relevant only insofar as they causally influence those effects. We now have:

\begin{restatable}[CDT-based justification of v-CMC]{theorem}{CDTTheorem}\label{thm:CDTTheorem}
Suppose $G$ is a DAG and $u$ is a value function on subsets of $G$ satisfying:
\begin{enumerate}
    \item $u$ has the child-mediation property with respect to $G$.
    \item $u$ obeys modularity with respect to $G$.
    \item $u$ has the CDT property with respect to $G$.
\end{enumerate}
Then $u$ obeys the local v-CMC on $G$.
\end{restatable}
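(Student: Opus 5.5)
Fix a node $X_i$ and a set $N\subseteq \NA{X_i}$; the goal is $X_i \indep_u N \mid \Ch{X_i}$, i.e.\ $u(x_i \mid n, c) = u(x_i \mid c)$ for all realizations, where $c$ ranges over realizations of $\Ch{X_i}$. The plan is to split $N$ into two parts that can be handled separately: $N_D := N \cap \mathrm{De}(X_i)$, the non-child descendants of $X_i$ in $N$, and $N_{ND} := N \setminus N_D$, the non-descendants of $X_i$ that are not ancestors. Child mediation handles $N_D$ directly. The CDT property, transported back from $u_{do(X_i)}$ to $u$ by modularity, handles $N_{ND}$. The two pieces are then combined with the semi-graphoid axioms of Theorem~\ref{thm:SemiGraphoidTheorem}, which are available because conditional consistency is assumed throughout.

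\textbf{Step 1 (descendants).} By Definition~\ref{def:ChildMediation} with $W=X_i$ and $D'=D\setminus \Ch{X_i}$, where $D$ is the set of all descendants of $X_i$, we get $X_i \indep_u D\setminus\Ch{X_i} \mid \Ch{X_i}$. Severing arrows into $X_i$ does not change its descendant set, so $D$ is also the descendant set of $X_i$ in $G_{do(X_i)}$.

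\textbf{Step 2 (non-descendants).} By Definition~\ref{def:CDTProperty}, $X_i \indep_{u_{do(X_i)}} N_{ND} \mid D$. I then want to replace $u_{do(X_i)}$ by $u$. The relevant conditional values involve only the sets $\{X_i\}\cup N_{ND}\cup D$ and $N_{ND}\cup D$. Since $N_{ND}$ contains no ancestors of $X_i$ and $D$ contains only descendants, no arrow of $G$ enters $X_i$ from a vertex in $V' := \{X_i\}\cup D\cup N_{ND}$. Hence $G[V'] = G_{do(X_i)}[V']$, and Definition~\ref{def:Modularity} gives $u_{do(X_i)}=u$ on all subsets of $V'$. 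This transfers the independence to $X_i \indep_u N_{ND} \mid D$.

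\textbf{Step 3 (combination).} From Step 2, weak union and decomposition give $X_i \indep_u N_{ND} \mid \Ch{X_i}\cup (D\setminus\Ch{X_i})$. Together with Step 1, contraction yields $X_i \indep_u N_{ND}\cup(D\setminus\Ch{X_i}) \mid \Ch{X_i}$. Since $N \subseteq N_{ND}\cup(D\setminus\Ch{X_i})$, decomposition, or equivalently conditional consistency (Definition~\ref{conditional_consistency}), gives $X_i \indep_u N \mid \Ch{X_i}$.

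I expect Step 2 to be the main obstacle. The graph-theoretic check is the easy part: no edge into $X_i$ comes from $V'$, since any such source would be a parent and hence an ancestor. The real difficulty is bookkeeping about where $u$ and $u_{do(X_i)}$ are both defined, and the fact that the CDT property is stated for proper subsets $ND'\subset ND$, which must still cover $N_{ND}$. If $N_{ND}$ happens to equal all of $ND$, I would fall back on conditional consistency applied to a larger set, or argue the edge case separately.
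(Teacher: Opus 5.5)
Your proposal is correct and follows the paper's proof essentially step for step: the same split of $N$ into $N_D$ and $N_{ND}$, child mediation applied to $D\setminus\Ch{X_i}$, the CDT property transferred from $u_{do(X_i)}$ to $u$ via modularity, then contraction and conditional consistency. Your explicit check that $G[V']=G_{do(X_i)}[V']$ is, if anything, more careful than the paper's one-line appeal.

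The strict-subset worry is moot, since the paper uses $\subset$ loosely and itself applies the CDT property directly to $N_{ND}\subseteq ND$. The appeal to weak union and decomposition at the start of Step 3 is unnecessary, because $D=\Ch{X_i}\cup(D\setminus\Ch{X_i})$ makes that step a pure rewriting of Step 2.
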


This theorem provides widely applicable \emph{sufficient} conditions for the v-CMC. Child mediation is reasonable to require of any utility function over a value-sufficient DAG, and utility modularity and the CDT property are, as noted, appealing in many important decision-making contexts. However, we do not claim that the v-CMC applies \emph{only} under these conditions; a full discussion of its scope is beyond this paper.\footnote{See Section \ref{sec:clarifications} of the Appendix for more discussion of the scope of the v-CMC.}

\section{The causal probability-value duality principle}\label{DualityPrinciple}

Our discussion in the preceding section highlighted a striking asymmetry: in a causal graph, probability ``flows downwards'' whereas value ``flows upwards.'' In particular, the local v-CMC is the exact dual of the probabilistic CMC: whereas the CMC says that causal \emph{parents} probabilistically screen off variables from their \emph{non-descendants}, the v-CMC says that \emph{children} screen off (in the value sense) variables from their \emph{non-ancestors}. Taking this duality seriously suggests a systematic recipe for translating principles about DAGs and probability into corresponding principles about DAGs and value.

\begin{table}[ht]
\centering
\caption{The Causal Probability-Value Translation Key}
\label{tab:translation_key}
\renewcommand{\arraystretch}{1.3}
\begin{tabular}{@{}ll@{}}
\toprule
\textbf{Probability concept} & \textbf{Value concept} \\ \midrule
Probabilistic independence & Value independence \\
Parents ($\Pa{\cdot}$) & Children ($\Ch{\cdot}$) \\
Non-descendants & Non-ancestors \\
Common cause & Common effect \\
Parent-closed & Child-closed \\
$p(A \mid B) = \frac{p(A, B)}{p(B)}$ & $u(A \mid B) = u(A, B) - u(B)$ \\ \bottomrule
\end{tabular}
\end{table}

Using Table \ref{tab:translation_key}, we can dualize any principle relating causal DAGs and probability into a corresponding principle relating DAGs and value. While the existence of a dual does not by itself guarantee its truth, we show in the next two sections that several such utility-theoretic duals of familiar causal-inference principles can in fact be proven.

\section{Equivalent formulations and graphical criterion}\label{EquivalentFormulations}

As is well known, the probabilistic CMC admits three equivalent formulations: local, global, and decomposition (factorization). By the translation key in Table \ref{tab:translation_key}, each has a value dual. We first dualize the factorization formulation:

\begin{definition}[v-CMC (decomposition)]\label{decompositionvCMC}
Let $G=(V,E)$ be a causal DAG. A subgraph $G'=(V',E')$ is \emph{child-closed} if $X_i\in V'$ and $X_j\in \ChG{X_i}$ implies $X_j\in V'$. We say that $G$ and $u$ jointly obey the \emph{decomposition v-CMC} if for every child-closed $G'$ with vertex set $V'=\{X_{i_1},\dots,X_{i_m}\}$,
\[
u(X_{i_1},\dots,X_{i_m}) \;=\; \sum_{k=1}^m u\bigl(X_{i_k} \mid \ChG{X_{i_k}}\bigr).
\]
\end{definition}

Readers may note that Definition \ref{decompositionvCMC} is somewhat more complex than the standard factorization version of the probabilistic CMC. This is because utility functions inherently have less structure than probability functions; this extra structure must therefore be imposed in the definition. Readers familiar with the literature on generalized additive independence (GAI) utility measures may also note that Definition \ref{decompositionvCMC} essentially yields a causally based GAI decomposition.

Next, we formulate a global version via the value dual of d-separation:

\begin{definition}[$v$-separation]
Let $X$ and $Y$ be nodes (or sets of nodes) in a DAG. We say that $X$ and $Y$ are $v$-separated by $Z$ iff every path between them is \emph{blocked} by $Z$, where a path is blocked iff: (1) it contains a chain $X\to M\to Y$ (or $Y\to M\to X$) with $M\in Z$; (2) it contains a common effect $X\to M\leftarrow Y$ with $M\in Z$; or (3) it contains a common cause $X\leftarrow M\to Y$ with neither $M$ nor any ancestor of $M$ in $Z$.
\end{definition}

For example, in Figure \ref{figure1}, the only path between $A$ and $R$ goes through the common effect $E$. Hence, $E$ $v$-separates $R$ from $A$.

By construction, $v$-separation is the exact dual of $d$-separation. Hence, if $G^{\text{Rev}}$ is identical to $G$ except that all the arrows are reversed, then $X$ and $Y$ are $v$-separated by $Z$ on $G$ if and only if $X$ and $Y$ are $d$-separated by $Z$ on $G^{\text{Rev}}$---this fact will be useful below.

We are now in a position to formulate the global formulation of the v-CMC:

\begin{definition}[v-CMC (global version)]\label{globalvCMC}
Let $G$ be a causal DAG and $u$ a value function. We say that $G$ and $u$ jointly obey the \emph{global v-CMC} if whenever $X$ and $Y$ are $v$-separated by $Z$, then $X \indep_u Y \mid Z$.
\end{definition}

In the appendix, we show that all three formulations of the v-CMC are equivalent:

\begin{restatable}[v-CMC Equivalence Theorem]{theorem}{EquivalenceTheorem}\label{thm:EquivalenceTheorem}
Let $G$ be a DAG and $u$ a value function. Then:
\begin{enumerate}
    \item Local v-CMC $\iff$ Global v-CMC.
    \item Local v-CMC $\iff$ Decomposition v-CMC.
\end{enumerate}
\end{restatable}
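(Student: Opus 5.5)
The plan is to exploit the duality of Section~\ref{DualityPrinciple} by reversing all arrows. Let $G^{\text{Rev}}$ be $G$ with every edge reversed. Then children in $G$ are parents in $G^{\text{Rev}}$, non-ancestors in $G$ are non-descendants in $G^{\text{Rev}}$, and, as noted after the definition of $v$-separation, $v$-separation in $G$ is $d$-separation in $G^{\text{Rev}}$. Under this dictionary the local v-CMC on $G$ becomes the statement ``each $X_i$ is independent of its non-descendants given its parents'' in $G^{\text{Rev}}$, with $\indep_u$ in place of probabilistic independence. Likewise, the global v-CMC becomes the $d$-separation Markov property for $\indep_u$ on $G^{\text{Rev}}$.

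For part 1, I would not appeal to probability theory. Instead I would use the purely order-theoretic fact of Lauritzen et al.\ and Pearl–Verma: for any independence relation satisfying the semi-graphoid axioms, the local (directed) Markov property on a DAG is equivalent to the global $d$-separation property. Theorem~\ref{thm:SemiGraphoidTheorem} provides exactly the semi-graphoid axioms for $\indep_u$, using conditional consistency. Global $\Rightarrow$ local is immediate, since $\Ch{X_i}$ $v$-separates $X_i$ from any $N\subseteq\NA{X_i}$; every path from $X_i$ leaves through a child or a parent, and paths through parents are blocked at a child (collider or chain) or reach only ancestors. For local $\Rightarrow$ global, I would follow the standard semi-graphoid proof. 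First derive the ordered (``boundary'') Markov property with respect to a topological ordering of $G^{\text{Rev}}$, i.e.\ a reverse topological order of $G$, using decomposition and weak union. Then invoke Verma–Pearl's theorem that a causal input list closed under the semi-graphoid axioms yields every $d$-separation statement. Throughout I will keep the caveat that statements are only required where $u$ is defined.

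For part 2, decomposition $\Rightarrow$ local: take $X_i$ and $N\subseteq\NA{X_i}$. I would apply the decomposition to two child-closed sets, namely $C_1$, the descendant-closure of $\{X_i\}\cup N\cup\Ch{X_i}$, and $C_2=C_1\setminus\{X_i\}$. The set $C_2$ is child-closed because no child of another node in $C_1$ equals $X_i$ unless that node is an ancestor of $X_i$, and such ancestors are excluded because $N$ contains no ancestors. Subtracting the two decompositions gives $u(X_i\mid C_2)=u(X_i\mid\Ch{X_i})$. Each summand $u(X_j\mid\Ch{X_j})$ for $X_j\neq X_i$ is unchanged, since $X_i$ is not a child of any such $X_j$. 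I then need to pass from conditioning on $C_2$ to conditioning on $\Ch{X_i}\cup N$. For this I would use conditional consistency: $X_i\indep_u (C_2\setminus\Ch{X_i})\mid\Ch{X_i}$ implies $X_i\indep_u N\mid\Ch{X_i}$.

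Local $\Rightarrow$ decomposition: given child-closed $V'$, order its nodes reverse-topologically (children first) and telescope the chain rule $u(V')=\sum_k u(X_{i_k}\mid X_{i_1},\dots,X_{i_{k-1}})$, which is a direct consequence of (\ref{conditional_value2}). The predecessors of $X_{i_k}$ include all its children, by child-closure, plus only non-ancestors of $X_{i_k}$. The local v-CMC therefore replaces each term with $u(X_{i_k}\mid\Ch{X_{i_k}})$. The main obstacle I expect is the local $\Rightarrow$ global direction of part 1. Its semi-graphoid proof is long, and I must check that only the four semi-graphoid axioms are used, not intersection, since $\indep_u$ may fail intersection. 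The Verma–Pearl/Lauritzen result guarantees this, so I would cite it and verify the translation carefully rather than reprove it.
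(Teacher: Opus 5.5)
Your proposal is correct and follows essentially the paper's route: part 1 uses arrow reversal, the $v$-/$d$-separation duality, and the semi-graphoid local/global equivalence (the paper cites Theorem 2.44 of Lauritzen). Part 2 subtracts the decompositions of two nested child-closed sets and then applies conditional consistency, and obtains the converse by telescoping the chain rule along a children-first order. The only difference is cosmetic: the paper proves local $\Rightarrow$ decomposition by an induction that removes a root of the whole graph, whereas your telescoping applies directly to an arbitrary child-closed $V'$, which is slightly cleaner.
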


\begin{proof}
Proof sketch: (i) \textbf{Local v-CMC} $\iff$ \textbf{Global v-CMC}: we first show that $v$-separation on a DAG $G$ is equivalent to $d$-separation on the DAG $G^{\text{Rev}}$ that is identical to $G$ except that all the arrows are reversed (the fact we stated earlier). We then use this equivalence together with the fact that, for a semi-graphoid independence relation, the Global and Local versions of the CMC are equivalent (see Theorem 2.44 of \cite{Lauritzen2019}) in order to conclude Local v-CMC $\iff$ Global v-CMC. 
(ii) \textbf{Local v-CMC} $\iff$ \textbf{Decomposition v-CMC}: The fact that the utility functions have less mathematical structure than probability functions (in particular, they cannot be marginalized) means we cannot easily use the relevant analogous proof from the causal inference literature. Instead, we prove the $\leftarrow$ direction directly and the $\rightarrow$ direction by induction on the number of nodes in $G$.
\end{proof}

Finally, $v$-separation provides a sound and complete graphical criterion for the class of value functions that obey v-CMC on $G$:

\begin{restatable}[Soundness and Completeness]{theorem}{SoundnessCompleteness}\label{thm:SoundnessCompleteness}
Let $G=(V,E)$ be a causal DAG and let $X,Y,Z\subset V$ be disjoint.
\begin{enumerate}
    \item \textbf{Soundness:} If $X$ and $Y$ are $v$-separated by $Z$ in $G$, then $X \indep_u Y \mid Z$ for all value functions $u$ that obey the decomposition v-CMC on $G$.
    \item \textbf{Completeness:} If $X$ and $Y$ are not $v$-separated by $Z$ in $G$, then there exists a value function $u$ that obeys the decomposition v-CMC on $G$ such that $X \not\indep_u Y \mid Z$.
\end{enumerate}
\end{restatable}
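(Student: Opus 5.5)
Soundness follows from results already established; completeness needs an explicit construction. For soundness, take any $u$ that obeys the decomposition v-CMC on $G$. By Theorem~\ref{thm:EquivalenceTheorem} (part 2), $u$ obeys the local v-CMC, and by part 1 it obeys the global v-CMC. By Definition~\ref{globalvCMC}, $v$-separation of $X$ and $Y$ by $Z$ then gives $X \indep_u Y \mid Z$.

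For completeness, I would avoid a probabilistic construction and build $u$ directly from local terms. This uses the fact that the decomposition v-CMC is essentially a GAI-type sum over the families $\{X_i\}\cup\Ch{X_i}$. Take all variables binary and choose, for each node, a local function $f_i(x_i,\Ch{X_i})$. For every subset $A\subseteq V$, define $u(A)$ as the sum of $f_i$ over all $i$ in the child-closure of $A$, with the variables of the closure outside $A$ fixed at a reference value $0$. Equivalently, define $u$ on child-closed sets by $\sum_i f_i$ and extend it to arbitrary sets by evaluating their child-closures at the default values. One then checks that $u(X_i\mid \Ch{X_i})=f_i$ on child-closed sets, so the decomposition v-CMC holds. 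This check is routine if the $f_i$ are normalized so that $f_i(0,\cdot)=0$.

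Next I reduce to a single active path via the reversal fact: $X,Y$ are not $v$-separated by $Z$ in $G$ iff they are not $d$-separated in $G^{\text{Rev}}$. So there is a path $\pi$ between some $x\in X$ and $y\in Y$ that is active given $Z$ in the $v$-sense. Its common effects lie in $Z$, and each of its common causes $M$ has neither $M$ nor any ancestor of $M$ in $Z$. Its chain nodes are outside $Z$. I would set $f_i\equiv 0$ except on nodes and edges of $\pi$, plus the descendant-directed paths from each common cause on $\pi$ toward... In the reversed graph these are the ancestor-to-$Z$ connections that activate the colliders. More precisely, mirroring the standard Meek/Geiger–Pearl completeness construction in $G^{\text{Rev}}$, I give each node on $\pi$ an interaction term $f_i = c\,x_i x_j$ with its neighbor(s) on $\pi$ that are children in $G$. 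At a common effect $M\in Z$, the two parents both interact with $M$ through their own terms.

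The final step is to verify $u(x\mid y,z)-u(x\mid z)\neq 0$ for some realization. This reduces to showing that the coupling along $\pi$ yields an $x\cdot y$ interaction in $u(x,y,z)$ that does not cancel with the terms in $u(y,z)$, $u(x,z)$ and $u(z)$. I expect this to be the main obstacle. Because $u$ cannot be marginalized, conditioning is implemented only through which variables are fixed at defaults, and a chain node outside $Z$ sits at its default $0$. Under the naive product choice that kills the interaction. I would therefore first try additive link terms, e.g.\ $f_i=c\,\mathbf{1}[x_i=x_j]$ along the chain, with default-valued intermediate nodes contributing nothing. If that also fails, I would reduce to the case where $\pi$ is a single edge or a single collider (the chain-through-$M\notin Z$ case), handle those by direct computation, and then extend by induction on path length.
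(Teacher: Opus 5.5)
Your soundness argument is the paper's: decomposition $\Rightarrow$ local $\Rightarrow$ global via Theorem~\ref{thm:EquivalenceTheorem}, then Definition~\ref{globalvCMC}. The completeness half has a genuine gap, and it sits exactly at the step you flag as the main obstacle.

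With the normalization $f_i(0,\cdot)=0$, your $u$ is just $u(A)(a)=U(a,\mathbf{0})$ with $U=\sum_i f_i(x_i,\Ch{X_i})$. Every variable outside $A$ is clamped at its default instead of being summed out. Hence $u(x\mid y,z)-u(x\mid z)$ is the mixed second difference of $U$ in $(x,y)$ on the slice where all remaining variables are $0$. Terms headed outside $X\cup Y\cup Z$ vanish there, and each surviving term depends on only one family. So the difference is nonzero only if a single family $\{W\}\cup\Ch{W}$ with $W\in X\cup Y\cup Z$ meets both $X$ and $Y$. Dependence therefore never travels through an intermediate node outside $Z$, whatever local functions you choose.

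Concretely, take the chain $x\to m\to y$ with $Z=\emptyset$, which is not $v$-separated. You get $u(x\mid y)=f_x(x,0)=u(x)$. Likewise $u(x\mid y)=u(x)$ for the common effect $x\to m\leftarrow y$ with $Z=\emptyset$.

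Your fallbacks do not repair this. A link term $c\,\mathbf{1}[x_i=x_j]$ collapses to a function of one variable as soon as either endpoint is clamped, and it violates the normalization anyway. The induction on path length cannot start, because the one-intermediate-node chain is precisely where the construction returns independence. Dropping the normalization keeps the terms local. An $x$--$y$ interaction then still needs one family containing both an $X$-node and a $Y$-node, so it cannot propagate along a chain or through a common effect. At most you get $x$-free offsets from mismatched child-closures. That is not the mechanism you propose, and it would need its own verification, including conditional consistency.

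The missing ingredient is marginalization. The decomposition v-CMC essentially fixes $u$ only on child-closed sets, through the family differences $u(X_i\mid\Ch{X_i})$. So whether $X\indep_u Y\mid Z$ holds is decided by how $u$ is extended to $X\cup Y\cup Z$, $Y\cup Z$, $X\cup Z$ and $Z$, and that extension has to carry dependence along active paths.

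The paper does this as follows. It takes a strictly positive $P$ that factorizes over $G^{\text{Rev}}$ with $X\not\indep_P Y\mid Z$, using completeness of $d$-separation plus Meek's measure-zero result. It then sets $u(S):=\log P(S)$, where $P(S)$ is the \emph{marginal}. This gives $u(S_1\mid S_2)=\log P(S_1\mid S_2)$, so $\indep_u$ coincides with $\indep_P$ (conditional consistency comes for free) and $X\not\indep_u Y\mid Z$. Taking logs of $P(V')=\prod_{w\in V'}P(w\mid\Pa{w}_{G^{\text{Rev}}})$ for $V'$ parent-closed in $G^{\text{Rev}}$, i.e.\ child-closed in $G$, with $\Pa{w}_{G^{\text{Rev}}}=\Ch{w}_G$, yields the decomposition v-CMC.

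Separately, you have inverted two activation conditions. On a path active given $Z$ in the $v$-sense, every common effect lies \emph{outside} $Z$, and every common cause $M$ has $M$ or some ancestor of $M$ in $Z$. What you wrote are blocking conditions (2) and (3). In particular, a common effect $M\in Z$, where you planned to put interaction terms, blocks the path.
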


\noindent 

\begin{proof}
(i) \textbf{Completeness} (proof sketch; complete proof in the appendix): If $X$ and $Y$ are not $v$-separated in $G$, then they are not $d$-separated in $G^{\text{Rev}}$. Since $d$-separation is complete \citep{GeigerVermaPearl1990b} and the set of unfaithful distributions has measure 0 \citep{Meek1995}, there exists a strictly positive probability function $P$ satisfying the factorization property with respect to $G^{\text{Rev}}$ such that $X \not\indep_P Y \mid Z$. We then map $P$ to the value function $u = \log P$, and show that conditional value independence (with respect to $u$) is a semi-graphoid, and that $u$ obeys the decomposition property with respect to $G$. Finally, we show that $X \not\indep_u Y \mid Z$, and hence $u$ acts as a witness showing that if $X$ and $Y$ are not $v$-separated in $G$, then there exists a $u$ such that $X \not\indep_u Y \mid Z$.

(ii) \textbf{Soundness}: Suppose $X$ and $Y$ are $v$-separated by $Z$ in $G$ and let $u$ be a value function that obeys the decomposition v-CMC on $G$. By Theorem \ref{thm:EquivalenceTheorem}, $u$ obeys the global v-CMC on $G$, and therefore (by Definition \ref{globalvCMC}), $X \indep_u Y \mid Z$. 
\end{proof}

\section{Implications and applications}\label{Applications}

We now discuss several implications of the v-CMC, and present constructions it licenses. We begin by noting that the v-CMC generalizes the Bellman equation \citep{Bellman1957} to causal DAGs.

\subsection{A Bellman-type recursion on causal DAGs}

Define the \emph{height} of a node as the length of the longest directed path from that node to any leaf (a node with no children). Let $G_i:=(V_i,E_i)$ be the subgraph of $G$ consisting of all nodes of height at most $i$ (so $G_0$ contains exactly the leaf nodes). By construction, each $G_i$ is child-closed. We then obtain:

\begin{restatable}[Bellman-type decomposition]{theorem}{Bellman}\label{thm:Bellman}
Let $G=(V,E)$ be a causal DAG and let $u$ be a value function over $V$ satisfying the v-CMC. Let $P_i:=V_i\setminus V_{i-1}$. Then
\begin{equation}
    u(V_i) \;=\; \sum_{W \in P_i} u(W \mid \Ch{W}) \;+\; u(V_{i-1}).
\end{equation}
\end{restatable}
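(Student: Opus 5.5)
The plan is to apply the decomposition v-CMC (Definition \ref{decompositionvCMC}) twice and subtract. By Theorem \ref{thm:EquivalenceTheorem}, $u$ satisfying the (local) v-CMC is equivalent to $u$ satisfying the decomposition v-CMC. The latter gives, for every child-closed subgraph, an additive expression for the joint utility of its vertex set.

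The first step is to verify that both $G_i$ and $G_{i-1}$ are child-closed. Suppose $X$ has height $h$ and $Y\in\Ch{X}$. Prepending the edge $X\to Y$ to any longest directed path from $Y$ to a leaf gives a path from $X$, so $X$ has height at least $1+\text{height}(Y)$. Hence $\text{height}(Y)<h\le i$, and $Y\in V_i$ whenever $X\in V_i$. The same argument covers $V_{i-1}$. For $i=0$ we use the convention $V_{-1}=\emptyset$, $u(\emptyset)=0$.

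A point to check is that children computed in the subgraph agree with children in $G$: $\mathrm{Ch}_{G_i}(W)=\Ch{W}$ for every $W\in V_i$. This holds precisely because $G_i$ is child-closed, taking $G_i$ to be the induced subgraph. The same holds for $G_{i-1}$.

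Applying the decomposition v-CMC to $G_i$ gives
\[
u(V_i)=\sum_{W\in V_i}u\bigl(W\mid \Ch{W}\bigr),
\]
and applying it to $G_{i-1}$ gives
\[
u(V_{i-1})=\sum_{W\in V_{i-1}}u\bigl(W\mid \Ch{W}\bigr).
\]
The second display uses the same local terms, since children agree across the two subgraphs. Because $V_{i-1}\subseteq V_i$ and $P_i=V_i\setminus V_{i-1}$, the sum over $V_i$ splits into a sum over $P_i$ plus a sum over $V_{i-1}$. The latter sum equals $u(V_{i-1})$, which yields the claimed recursion.

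The main obstacle is the bookkeeping about children: the terms $u(W\mid\mathrm{Ch}_{G'}(W))$ in Definition \ref{decompositionvCMC} are indexed by the subgraph $G'$. Identifying them across $G_i$ and $G_{i-1}$ is exactly where child-closure is needed. A secondary point is that $u$ must be defined on $V_i$, on $V_{i-1}$, and on the sets $\{W\}\cup\Ch{W}$ and $\Ch{W}$. This is covered by the standing assumption that $u$ is defined on the relevant subsets, with $u$ taken over $V$.
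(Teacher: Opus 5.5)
Your proof is correct, but it takes a different route from the paper's. You invoke the Local $\Rightarrow$ Decomposition half of Theorem~\ref{thm:EquivalenceTheorem}. You then observe that the recursion is simply the difference of the decomposition v-CMC applied to the nested child-closed sets $V_{i-1}\subseteq V_i$. Your height argument for child-closure is exactly what this needs.

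One small correction on what you call the main obstacle: Definition~\ref{decompositionvCMC} already writes the local terms with children taken in $G$. So identifying terms across subgraphs is automatic. Child-closure is needed only so that $V_i$ and $V_{i-1}$ are admissible sets for the definition.

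The paper does not go through the decomposition form. Its steps are:
\begin{itemize}
\item It writes $u(V_i)=u(P_i\mid V_{i-1})+u(V_{i-1})$ and splits $V_{i-1}=\Ch{P_i}\cup R$, with $R$ consisting of non-ancestors of $P_i$.
\item It uses the \emph{global} v-CMC ($\Ch{P_i}$ $v$-separates $P_i$ from $R$) to reduce the first term to $u(P_i\mid \Ch{P_i})$.
\item It proves by a separate induction that $u(S\mid \Ch{S})=\sum_{W\in S}u(W\mid \Ch{W})$ for any set $S$ in which no node is an ancestor of another.
\item It applies this to the level set $P_i$.
\end{itemize}

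Your route is shorter and makes clear that the Bellman-type recursion is an immediate corollary of the decomposition v-CMC. It also has slightly lighter dependencies. The paper's Local $\Rightarrow$ Decomposition argument uses the local v-CMC directly. The paper's Bellman proof instead passes through Local $\Rightarrow$ Global, and hence through the semi-graphoid property, which rests on conditional consistency and Lauritzen's Theorem 2.44.

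In exchange, the paper's route yields the antichain lemma as a byproduct. That lemma is a joint conditional decomposition $u(S\mid\Ch{S})$ for arbitrary sets of mutually non-ancestral nodes, not just for whole child-closed sets. It is useful in its own right, for example for assessing several same-height nodes jointly given their children.
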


The proof is in the appendix. To see the connection with the standard Bellman equation, consider the special case of a linear order $W_0 \leftarrow W_1 \leftarrow \cdots \leftarrow W_n$ of temporally ordered states, and assume that $u(W\mid \Ch{W})$ is invariant to the values of $\Ch{W}$. Then $u(W\mid \Ch{W})$ depends only on $W$ and can be interpreted as a reward function $r(W)$, in which case Theorem \ref{thm:Bellman} reduces to the finite-horizon Bellman recursion

\[
u(V_i) = r(W_i) + u(V_{i-1}).
\]

Thus, the v-CMC provides a normative justification for Bellman recursion, while also clarifying both how it can be generalized and when the corresponding decomposition may fail: (1) if $u(W\mid \Ch{W})$ depends on $\Ch{W}$, the resulting decomposition is a strict generalization of the standard Bellman recursion; and (2) if the DAG is not value sufficient, the corresponding Bellman-style decomposition need not hold.

Note that the recursion in Theorem~\ref{thm:Bellman} is purely a recursion for the value function. It does not explicitly include a probability model or policy function, as the standard Bellman equations do, which is why we call it a ``Bellman-type'' recursion. However, extending the result is straightforward. By taking expectations with respect to a probability distribution and combining the probabilistic CMC with the v-CMC, one obtains a recursion for expected values. Since this extension introduces no essentially new ideas beyond those developed here, we omit the details.

In addition to allowing us to derive a causal version of the Bellman recursion, we note that the v-CMC also has a natural application in \emph{inverse} reinforcement learning. Inverse reinforcement learning \citep{Adams2022, Arora2021} is concerned with reconstructing the reward function of an agent based on their behavior. As \cite{Arora2021} note, this problem is ill-posed because there is always a large number of reward functions that could potentially fit with a finite set of observed behaviors. The set of potential reward functions therefore needs to be constrained in some way. The v-CMC arguably provides a very natural constraint.

\subsection{Causally structured utility elicitation}

The recursion in the preceding section yields a simple elicitation procedure for any child-closed graph, which is given in pseudocode in Algorithm \ref{alg:dag-elicitation}. The algorithm requires no new elicitation scheme---it is compatible with standard MAUT methods, for example---but it structures elicitation in a principled way around an assumed causal DAG:

\begin{algorithm}[]
\caption{Elicitation algorithm}
\label{alg:dag-elicitation}
\SetAlgoLined
\DontPrintSemicolon
\KwIn{Value-sufficient DAG $G=(V,E)$ of height $H$}
\KwOut{Value function $u$ over $V$}

\For{$i=0$ \KwTo $H$}{
    $u(V_i) \leftarrow u(V_{i-1})$\;
    \ForEach{$W \in P_i$}{
        Elicit $u(W \mid \Ch{W})$\;
        $u(V_i) \leftarrow u(V_i) + u(W \mid \Ch{W})$\;
    }
}
\end{algorithm}

Consider Figure~\ref{figure1}. On a high level, the procedure then has four steps: \textbf{(1)} We begin by eliciting or assigning utilities to the patient's welfare being high or low, $u(W=1)$ and $u(W=0)$. \textbf{(2)} Next, we elicit the value of Length of Stay $T$, conditional on its child $W$, i.e., $u(T=1 \mid W)$ and $u(T=0 \mid W)$. \textbf{(3)} We assess Symptom Relief ($S$) and Adverse Reaction ($E$). Their children are $\{T, W\}$, so we assess the utility of $S$ and $E$ given $T$ and $W$, i.e., $u(S \mid T, W)$ and $u(E \mid T, W)$. \textbf{(4)} Finally, we assess the antibiotic $A$ given its children $S, E$, i.e., $u(A \mid S, E)$.

Note that steps 1-4 yield local value functions. If each has been elicited using hypothetical lotteries, then they must be calibrated to a common scale. In an MAUT framework, this can be done by considering hypothetical cross-condition lotteries (see \cite{KeeneyRaiffa1993} for methods). 

This algorithm will generally have substantial complexity savings. Even in the simple example with binary variables (as in Figure \ref{figure1}), specifying the joint value function from scratch requires assigning values to all $2^n$ joint states, with $n$ the number of nodes. By contrast, a rough upper bound on the number of elicitations required by Algorithm \ref{alg:dag-elicitation} is $n\max_{W \in V} 2^{1 + |\Ch{W}|}$, i.e., linear in the number of nodes.

\subsection{Utility transfer across causal contexts}

As discussed in the introduction, a central motivation for the v-CMC is that transferring decisions across causal contexts requires not only transferring probabilistic information, but also determining which aspects of a utility function should remain stable when the underlying causal model changes. The causal decomposition provided by the v-CMC makes this possible by allowing an already constructed utility function to be modified locally and modularly as the causal graph changes. Causal graphs may change in several ways: for example, through interventions, node additions, node removals, edge additions, edge removals, or combinations of these. To illustrate the basic idea, we consider two especially important cases: (1) intervening on a variable, and (2) expanding a DAG by adding newly relevant variables.

\textbf{Interventions:} suppose we intervene on a node $X$ by setting $X=x$, thereby yielding the graph $G_{do(X)}$ where all arrows into $X$ have been severed. This changes the child sets only for the parents of $X$: for any $P \in \Pa{X}_G$, we have $X \notin \Ch{P}_{G_{do(X)}}$, while all other nodes retain the same children. Hence, under the decomposition v-CMC, updating the decomposed utility representation to remain compatible with $G_{do(X)}$ requires revising only the local terms for those $P$ whose child sets change---namely $u\!\left(P \mid \Ch{P}_{G_{do(X)}}\right)$ for $P \in \Pa{X}_G$; all other local terms can be retained unchanged.

\textbf{DAG expansions:} suppose we discover that we have omitted a common causal effect $E$ of variables in our variable set, so that the DAG is not value sufficient. We then expand $G$ to include $E$, forming the new DAG $G_{+E}$. Extending $u$ to $G_{+E}$ is straightforward: by the decomposition v-CMC, it suffices to (i) determine the new local term $u\!\left(E \mid \Ch{E}_{G_{+E}}\right)$, and (ii) update the local terms $u\!\left(W \mid \Ch{W}_{G_{+E}}\right)$ for those $W$ whose child sets changed---namely $W \in \Pa{E}_{G_{+E}}$.

Note that in both cases the cost of updating depends on the size of the subset of affected nodes, not the total size of the graph, which is what makes the v-CMC representation modular in the sense promised above. Section \ref{sec:example} of the Appendix gives a worked example showing how the decomposition v-CMC can be used to update a joint utility modularly after interventions.

\subsection{Automated influence diagram construction}\label{sec:id}

Influence diagrams are an effective tool for structuring decision problems and can be used to represent both informational and causal relationships. The v-CMC provides a principled way to construct influence diagrams that respect causal structure. In an influence diagram with causal semantics, an arc from a variable node (i.e., a chance or decision node) to a value node indicates that the variable directly influences the value node. The decomposition formulation of the v-CMC (Definition~\ref{decompositionvCMC}) can therefore be used to construct a canonical influence diagram from a value-sufficient DAG of variable nodes as follows. For each nonzero local term $u(X_i \mid \Ch{X_i})$, introduce a value node representing the local value contributed by $X_i$, and draw arcs into that value node from $X_i$ and from each member of $\Ch{X_i}$. Variables for which $u(X_i \mid \Ch{X_i})\equiv 0$ have no local value contribution and may be omitted from the value-node construction, even though they may still matter instrumentally through their causal effects. 

Thus, in summary, the procedure is: (i) input a value-sufficient DAG and the set of variables with nonzero local value contribution; (ii) create one value node for each such variable; and (iii) connect that value node to the variable and its causal children. For example, applying Algorithm~\ref{alg:id-construction} to the DAG in Figure~\ref{figure1}, with $S$, $E$, $T$, and $W$ identified as the variables with nonzero local value contribution, yields Figure~\ref{figure2}.

\begin{algorithm}
\caption{Influence diagram construction algorithm}
\label{alg:id-construction}
\SetAlgoLined
\DontPrintSemicolon
\KwIn{Value-sufficient DAG $G=(V,E)$; Variable set $V'$}
\KwOut{Canonical influence diagram}
\For{$w \in V'$}{
    Construct value node $n$ for $w$\;
    Draw arc from $w$ to $n$\;
    \ForEach{$c \in \Ch{w}$}{
        Draw arc from $c$ to $n$\;
    }
}
\end{algorithm}

\begin{figure}[ht]
\centering
\begin{adjustbox}{width=\linewidth}
\begin{tikzpicture}[
  x=1cm,y=1cm,
  >=Latex,
  font=\small,
  node distance=10mm,
  var/.style={draw,rounded corners,align=center,minimum width=22mm,minimum height=6mm,inner sep=2.5pt},
  choice/.style={var,very thick,fill=black!10},
  child/.style={var,fill=blue!10},
  exogen/.style={var,fill=gray!15},
  value/.style={draw,diamond,aspect=1.6,align=center,inner sep=1.6pt,fill=yellow!15}
]

\node[choice] (A) at (0,0) {Antibiotic \(A\)\\(give vs.\ withhold)};

\node[child]  (S) at (3.4, 1.0) {Symptom relief \(S\)};
\node[child]  (E) at (3.4,-1.0) {Adverse reaction \(E\)};

\node[var]    (T) at (6.8,0)     {Length of stay \(T\)};
\node[var]    (W) at (9.8,0)     {Patient welfare \(W\)};

\node[exogen] (D) at (-2.4, 1.2) {Diagnostics \(D\)};
\node[exogen] (I) at ( 1.2, 2.2) {Infection severity \(I\)};
\node[exogen] (R) at ( 1.2,-2.2) {Risk factors \(R\)};

\draw[->] (D) -- (A);

\draw[->] (A) -- (S);
\draw[->] (A) -- (E);

\draw[->] (I) -- (S);
\draw[->] (R) -- (E);

\draw[->] (S) -- (T);
\draw[->] (E) -- (T);
\draw[->] (T) -- (W);
\draw[->] (S) -- (W);
\draw[->] (E) -- (W);

\node[draw,dashed,rounded corners,fit=(S)(E),inner sep=3pt,label=below:{\footnotesize children of \(A\)}] (childbox) {};

\node[value] (v1) at ($(S)+(1.9,1.1)$) {$v_1$};
\node[value] (v2) at ($(T)+(1.9,1.1)$) {$v_2$};
\node[value] (v4) at ($(W)+(1.9,1.1)$) {$v_4$};
\node[value] (v3) at ($(E)+(1.9,-1.1)$) {$v_3$};

\draw[->] (S) -- (v1);
\draw[->] (T) -- (v1);
\draw[->] (W) -- (v1);

\draw[->] (T) -- (v2);
\draw[->] (W) -- (v2);

\draw[->] (E) -- (v3);
\draw[->] (T) -- (v3);
\draw[->] (W) -- (v3);

\draw[->] (W) -- (v4);

\end{tikzpicture}
\end{adjustbox}
\caption{Antibiotic example with value nodes attached to $S$, $E$, $T$, and $W$.}
\label{figure2}
\end{figure}

Note that the value nodes are constructed to agree, in a term-by-term fashion, with the decomposition over the DAG given by the decomposition v-CMC (\ref{decompositionvCMC}). Thus, the value nodes can be added: $v_{total} = v_1 + v_2 + v_3 + v_4$.

\section{Conclusion}

This paper has developed the conceptual and mathematical foundations of how causal assumptions constrain value functions, and we have relied on assumptions it is natural to try to weaken. For example, the v-CMC relates a single assumed DAG to a value function; an important next step is to extend the framework to settings with uncertainty about the DAG, and to causal structures with cycles.

We have also explored only a small set of applications. Further directions include using the v-CMC as a causal constraint in inverse RL (as suggested earlier), developing robust ways of handling possible value insufficiency, and using $v$-separation as a diagnostic tool for checking compatibility of reported utilities with causal assumptions. Taken together, these directions suggest a broader research program in \emph{causal value theory}, in which causal structure plays a central normative role in shaping rational preferences.

\begin{acknowledgements} Thanks to Steven Diggin, Malcolm Forster, Reuben Stern, Joshua Thong, and several reviewers at UAI who made the paper better than it otherwise would have been. This paper is part of the research project ``Towards a Theory of Rational Desires'' whose broader goal is to investigate normative constraints on value functions. Work on the project is supported by funding from the European Research Council (ERC) under the European Union’s Horizon Europe research and innovation programme (Grant agreement No. 101164097).
\end{acknowledgements}

\newpage

\onecolumn

\title{A Causal Markov Condition for Value\\(Supplementary Material)}
\maketitle

\appendix

\section{A few clarificatory notes about the v-CMC}\label{sec:clarifications}

Because the v-CMC is likely to prompt several questions---as well as possibly objections---in readers' minds, we include here several (hopefully) clarifying notes about its scope. 

First, we noted earlier (in Section \ref{sec:justifications}) that the v-CMC is not committed to a causal-decision-theoretic view of decision theory, even though our justification relies on a ``CDT property.'' In fact, we do not think the plausibility of the v-CMC even depends on having a consequentialist view of value. For example, a non-consequentialist, deontological conception that regards the value of an act as determined by the motive behind the act is also compatible with the v-CMC, provided that the agent's motive can be represented as a variable in the causal model. In this case, causal non-child descendants are vacuously screened off, since \emph{all} causal consequences are regarded as irrelevant. Furthermore, non-ancestors that are not descendants are \emph{also} irrelevant, since the motive behind an act is a causal ancestor of the act. We thus think the v-CMC is compatible with both consequentialist and non-consequentialist conceptions of value, although a full explanation and defense of this claim is beyond the scope of this paper.

Second, the v-CMC constrains conditional utilities, such as $u(A \mid \Ch{A})$, not expected utilities such as $\EU{A}$. For example, in Figure \ref{figure1}, v-CMC implies that $u(A \mid S, E, I) = u(A \mid S, E)$, but $I$ obviously affects the probability of $S$, and since $S$ affects the value of $A$, it is reasonable to expect $I$ to affect $\EU{A}$, even though it is screened off in the conditional value assessment. Indeed, writing (for simplicity) the expected utility of administering the antibiotic as

\[
\EU{A \mid I} \;=\; \sum_{s,e} u(A \mid s,e)\,P(s,e \mid A,I),
\]

we see that even if $u(A \mid s,e,I)=u(A \mid s,e)$, the quantity $P(s,e \mid A,I)$ may still depend on $I$, and hence so may $\EU{A \mid I}$.

Third, even though the v-CMC implies that $A$ and $B$ are value independent if neither is a cause of the other and they have no common causal effect, this does not mean that $A$ and $B$ must be value independent \emph{conditionally} on all other variables. Consider Figure \ref{figure1} again. The utility of the patient's having an adverse reaction, given that you have administered the antibiotic to them and they experienced symptom relief is plausibly \emph{higher} than the value of the patient's having an adverse reaction given that you administered the antibiotic and they have \emph{no} symptom relief. Given that you have administered the antibiotic, having symptom relief plausibly compensates, to some degree, for having an adverse reaction. More generally, even if $X$ and $Y$ are value independent, conditioning on a common ancestor may render $X$ and $Y$ conditionally value dependent. This is analogous to how conditioning on a common effect may render $X$ and $Y$ conditionally probabilistically dependent.

\section{Proof that conditional value independence obeys semi-graphoid axioms}

First, we list the semi-graphoid axioms \citep{PearlPaz1987}:

\begin{definition}[The semi-graphoid axioms]\label{SemiGraphoidAxioms}
Let \(X,Y,W,Z\) be disjoint sets of variables.
A conditional independence relation $\indep_u$ is a semi-graphoid if it satisfies:

\begin{enumerate}
\item \textbf{Symmetry:}
\[
(X \indep_u Y \mid Z) \;\Rightarrow\; (Y \indep_u X \mid Z).
\]

\item \textbf{Semi-graphoid decomposition:}
\[
(X \indep_u Y \cup W \mid Z) \;\Rightarrow\;
(X \indep_u Y \mid Z)\ \text{and}\ (X \indep_u W \mid Z).
\]

\item \textbf{Weak Union:}
\[
(X \indep_u Y \cup W \mid Z) \;\Rightarrow\;
(X \indep_u Y \mid Z \cup W).
\]

\item \textbf{Contraction:}
\[
(X \indep_u Y \mid Z)\ \text{and}\ (X \indep_u W \mid Z \cup Y)
\;\Rightarrow\;
(X \indep_u Y \cup W \mid Z).
\]
\end{enumerate}
\end{definition}

We now prove the theorem:

\begin{theorem}[Conditional value independence obeys semi-graphoid axioms]
\label{thm:SemiGraphoidTheorem}
Let $u$ be a value function defined as in (\ref{value_function}), let conditional value be defined as in (\ref{conditional_value2}), and let conditional value independence be defined as in (\ref{conditional_independence}). If conditional value independence has the Consistency property (Definition~\ref{conditional_consistency}), then it obeys all the semi-graphoid axioms.
\end{theorem}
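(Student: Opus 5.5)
The plan is to reduce every axiom to an identity of the form ``one joint utility minus another,'' using only the definition $u(a\mid b)=u(a,b)-u(b)$. Conditional consistency (Definition~\ref{conditional_consistency}) will be invoked exactly where the hypothesis quantifies over a larger set than the conclusion needs.

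First I rewrite the independence statement $(X\indep_u Y\mid Z)$ as the identity
\[
u(x,y,z)-u(y,z)=u(x,z)-u(z) \quad\text{for all } x,y,z .
\]
\textbf{Symmetry.} Rearranging gives $u(x,y,z)-u(x,z)=u(y,z)-u(z)$, which is precisely $u(y\mid x,z)=u(y\mid z)$. So symmetry is pure algebra.

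\textbf{Decomposition.} From $(X\indep_u Y\cup W\mid Z)$, conditional consistency with $B'=Y$ and $B'=W$ gives both conclusions immediately. This is the step where consistency is genuinely needed. Utilities cannot be marginalized, so $u(x\mid y,z)$ cannot be recovered from $u(x\mid y,w,z)$ by summing out $w$.

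\textbf{Weak union.} From $(X\indep_u Y\cup W\mid Z)$ we have $u(x\mid y,w,z)=u(x\mid z)$. Decomposition (just proved) gives $(X\indep_u W\mid Z)$, i.e.\ $u(x\mid w,z)=u(x\mid z)$. Chaining the two yields $u(x\mid y,w,z)=u(x\mid w,z)$, which is $(X\indep_u Y\mid Z\cup W)$.

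\textbf{Contraction.} The hypotheses give $u(x\mid y,w,z)=u(x\mid y,z)$ and $u(x\mid y,z)=u(x\mid z)$. Chaining them gives $u(x\mid y,w,z)=u(x\mid z)$, which is $(X\indep_u Y\cup W\mid Z)$.

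The main obstacle is making sure each utility term I write down is actually defined on $S$, and being explicit about conventions. Empty sets are covered by $u(\emptyset)=0$. Weak union relies on decomposition, so it inherits the consistency assumption. I would present decomposition first and then derive weak union from it, so that the single appeal to consistency is isolated.
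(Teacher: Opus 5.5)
Your proof is correct and follows the paper's argument essentially step for step. Symmetry comes from rearranging the subtractive identity, decomposition directly from conditional consistency, weak union from chaining the hypothesis with the consequence $(X \indep_u W \mid Z)$ obtained by decomposition, and contraction from substituting one hypothesis into the other; the paper proves contraction before decomposition, but this difference in order is immaterial.
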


\begin{proof}
Throughout, let $X,Y,W,Z$ be disjoint sets of variables. Recall that, by the definition of conditional value (Definition \ref{conditional_value2}),
\[
u(X \mid Y,Z) \;=\; u(X \cup Y \cup Z) \;-\; u(Y \cup Z),
\qquad
u(X \mid Z) \;=\; u(X \cup Z) \;-\; u(Z).
\]
Hence, by Definition~\ref{conditional_independence},
\begin{equation}\label{eq:cindep_expanded}
(X \indep_u Y \mid Z)
\quad\Longleftrightarrow\quad
u(X \cup Y \cup Z) - u(Y \cup Z) \;=\; u(X \cup Z) - u(Z),
\end{equation}
for all realizations of $(X,Y,Z)$ (we suppress explicit realization notation for readability).

\paragraph{Symmetry.}
Assume $(X \indep_u Y \mid Z)$. By (\ref{eq:cindep_expanded}),
\[
u(X \cup Y \cup Z) \;=\; u(Y \cup Z) + u(X \cup Z) - u(Z).
\]
Subtracting $u(X \cup Z)$ from both sides yields
\[
u(X \cup Y \cup Z) - u(X \cup Z) \;=\; u(Y \cup Z) - u(Z),
\]
i.e.,
\[
u(Y \mid X,Z) \;=\; u(Y \mid Z).
\]
Thus $(Y \indep_u X \mid Z)$, proving Symmetry.

\paragraph{Contraction.}
Assume $(X \indep_u Y \mid Z)$ and $(X \indep_u W \mid Z \cup Y)$. Expanding each by (\ref{eq:cindep_expanded}) gives:
\begin{align}
u(X \cup Y \cup Z) - u(Y \cup Z) &= u(X \cup Z) - u(Z), \label{eq:contr1}\\
u(X \cup W \cup Y \cup Z) - u(W \cup Y \cup Z) &= u(X \cup Y \cup Z) - u(Y \cup Z). \label{eq:contr2}
\end{align}
Substituting (\ref{eq:contr1}) into the right-hand side of (\ref{eq:contr2}) yields
\[
u(X \cup W \cup Y \cup Z) - u(W \cup Y \cup Z) \;=\; u(X \cup Z) - u(Z),
\]
which, by (\ref{eq:cindep_expanded}), is exactly
\[
(X \indep_u Y \cup W \mid Z).
\]
Thus Contraction holds.

\paragraph{Semi-graphoid decomposition.}
Assume $(X \indep_u Y \cup W \mid Z)$. By the Consistency property (Definition \ref{conditional_consistency}), independence from a set implies independence from any subset of that set. Since $Y \subseteq Y \cup W$ and $W \subseteq Y \cup W$, we obtain
\[
(X \indep_u Y \mid Z)
\qquad\text{and}\qquad
(X \indep_u W \mid Z).
\]
Thus Decomposition holds.

\paragraph{Weak Union.}
Assume $(X \indep_u Y \cup W \mid Z)$. We must show $(X \indep_u Y \mid Z \cup W)$, i.e.,
\begin{equation}\label{eq:wu_goal}
u(X \cup Y \cup W \cup Z) - u(Y \cup W \cup Z)
\;=\;
u(X \cup W \cup Z) - u(W \cup Z).
\end{equation}
From $(X \indep_u Y \cup W \mid Z)$ and (\ref{eq:cindep_expanded}), we have
\begin{equation}\label{eq:wu1}
u(X \cup Y \cup W \cup Z) - u(Y \cup W \cup Z)
\;=\;
u(X \cup Z) - u(Z).
\end{equation}
By Decomposition (already established, using Consistency) applied to the same assumption $(X \indep_u Y \cup W \mid Z)$, we also have $(X \indep_u W \mid Z)$.

Expanding this by (\ref{eq:cindep_expanded}) yields

\begin{equation}\label{eq:wu2}
u(X \cup W \cup Z) - u(W \cup Z)
\;=\;
u(X \cup Z) - u(Z).
\end{equation}

Combining (\ref{eq:wu1}) and (\ref{eq:wu2}) shows that both sides of (\ref{eq:wu_goal}) are equal to $u(X \cup Z) - u(Z)$, and hence (\ref{eq:wu_goal}) holds. Therefore $(X \indep_u Y \mid Z \cup W)$, proving Weak Union.

\medskip
Since conditional value independence satisfies Symmetry, Decomposition, Weak Union, and Contraction, it obeys all the semi-graphoid axioms.
\end{proof}

\section{Uniqueness of the Subtractive Definition}

\AffineInvarianceTheorem*

\begin{proof}

Let $a=1$. Then the condition becomes:

\[
f(J+b,M+b)=f(J,M).
\]

Let $b=-M$. Then $f(J-M,0)=f(J,M)$. Define $g(z):=f(z,0)$. Then $f(J,M)=g(J-M)$, so $f$ depends only on the difference $J-M$.

Substituting into the general condition yields

\[
g\bigl((aJ+b)-(aM+b)\bigr)=a\,g(J-M),
\]

so the $b$ terms cancel and we obtain

\[
g(a(J-M))=a\,g(J-M).
\]

Letting $z=J-M$, this is $g(az)=a g(z)$ for all $a>0$. Hence, for $z>0$ we may set $a=z$ to get
$g(z)=z\,g(1)$, and for $z<0$ writing $z=-t$ with $t>0$ gives
$g(z)=g(-t)=t\,g(-1)$.
Let $g(1)=k_{+}$ and $g(-1)=-k_{-}$ (equivalently $k_-:=-g(-1)$). Then

\[
g(z)=
\begin{cases}
k_{+}\, z & \text{if } z \ge 0,\\
k_{-}\, z & \text{if } z \le 0,
\end{cases}
\qquad\text{with } k_{+},k_{-}>0.
\]

Since $f$ is differentiable at $(0,0)$, $g(z)=f(z,0)$ is differentiable at $0$, which implies that the left and right derivatives at $0$ agree, hence $k_{+}=k_{-}=:k$. Therefore $f(J,M)=k(J-M)$, i.e.,$u(x\mid y)=k\bigl(u(x,y)-u(y)\bigr)$. Since $f$ is increasing in $u(x,y)$ (holding $u(y)$ fixed), we have $k>0$. Finally, $k$ can be absorbed into the overall utility scale, so we may set $k=1$ without loss of generality.

\end{proof}

\section{Proof of the CDT-based justification of v-CMC}

We first restate the relevant definitions:

\ChildMediation*

\CDTProperty*

Here is the theorem we wish to prove:

\CDTTheorem*

\begin{proof}

Fix an arbitrary node $W \in V$. We want to show that for every set $N \subseteq \NA{W}_G$,

\begin{equation}\label{eq:vc_mc_goal}
W \indep_u N \mid \Ch{W}
\end{equation}

Let $D$ denote the set of all descendants of $W$ in $G$. Note that intervening on $W$ only removes arrows into $W$, so it does not change which nodes are reachable \emph{from} $W$. Hence, the set of descendants of $W$ is the same in $G$ and $G_{do(W)}$; in particular, we may use the same $D$ for both graphs. 

Now take any $N \subseteq \NA{W}_G$. We first decompose $N$ into its descendant and non-descendant parts:

\[
N_D \;:=\; N \cap \bigl(D \setminus \Ch{W}\bigr),
\qquad
N_{ND} \;:=\; N \setminus D.
\]

Then $N = N_D \cup N_{ND}$, and these sets are disjoint. Furthermore, $N_D \subseteq D\setminus \Ch{W}$. Using child mediation (\ref{eq:child_mediation}) with $D\setminus \Ch{W}$ yields:

\begin{equation}\label{eq:child_med_step}
W \indep_u (D\setminus \Ch{W}) \mid \Ch{W}.
\end{equation}

Use $\indep_{u_{do(W)}}$ to denote the independence relation induced by $u_{do(W)}$. Since $N_{ND} \subseteq ND$, the CDT property (\ref{eq:cdt_property}) implies

\[
W \indep_{u_{do(W)}} N_{ND} \mid D,
\]

Note that the graphical relations between $W$, $N_{ND}$, and $D$ are not affected by the intervention $do(W)$. Hence, by our assumption that $u_{do(W)}$ and $u$ are identical on all subgraphs unaffected by $do(W)$, we have:

\begin{align}
W \indep_{u_{do(W)}} N_{ND} \mid D \iff \\
W \indep_u N_{ND} \mid D
\end{align}

And therefore, since $D = \Ch{W} \cup (D \setminus \Ch{W})$, this can be written as follows:

\begin{equation}\label{eq:cdt_step}
W \indep_u N_{ND} \mid \Ch{W}, (D \setminus \Ch{W})
\end{equation}

Combining (\ref{eq:child_med_step}) and (\ref{eq:cdt_step}) using Contraction gives:

\[
W \indep_u N_{ND} \cup (D\setminus \Ch{W}) \mid \Ch{W}.
\]

Finally, using conditional consistency (\ref{conditional_consistency}), since $N_D \subseteq D\setminus \Ch{W}$, this implies:

\begin{equation}\label{eq:cond_step}
W \indep_u N_{ND} \cup N_D  \mid \Ch{W}.
\end{equation}

which is exactly (\ref{eq:vc_mc_goal}). Since $W$ and $N \subseteq \NA{W}_G$ were arbitrary, $u$ obeys the local v-CMC on $G$. 

\end{proof}

\section{Proofs of CMC equivalences}

Let us first restate the theorem:

\EquivalenceTheorem*

To show that the global and local versions of the v-CMC are equivalent, we need several results. First,  we need the semi-graphoid version of the CMC. Define:

\begin{definition}[Local SG-CMC]
     An independence relation $\indep$ and DAG $G = (V, E)$ obey the local semi-graphoid CMC (SG-CMC) if and only if, for every $W \in V$, and every set of nodes $ND$ in the set of non-descendants of $W$ (excluding $W$'s parents): 
        \begin{equation}
                W \indep ND \mid \Pa{W}
        \end{equation}
    \end{definition}

\begin{definition}[Global SG-CMC]
        An independence relation $\indep_u$ and DAG $G = (V, E)$ obey the global semi-graphoid CMC (SG-CMC) if and only if, for every disjoint set of nodes $X$, $Y$, and $Z$, if $Z$ d-separates $X$ from $Y$ in $G$, then $X \indep Y \mid Z$.
    \end{definition}

We need the following result (see Theorem 2.44 of \cite{Lauritzen2019}):

\begin{theorem}\label{semigraphoidCMC}
    Let $G=(V,E)$ be a DAG and let $\indep$ be a semi-graphoid independence relation on $V$. Then: Local SG-CMC $\iff$ Global SG-CMC
\end{theorem}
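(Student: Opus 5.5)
The plan is to prove the two directions separately. Global $\Rightarrow$ local is a purely graphical fact; local $\Rightarrow$ global is an induction on $|V|$ that removes a sink node, and it uses only the semi-graphoid axioms (Symmetry, Decomposition, Weak Union, Contraction). Throughout, $\indep$ is an arbitrary semi-graphoid; nothing specific to $\indep_u$ is used.

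\textbf{Global $\Rightarrow$ local.} Fix $W$ and a set $ND$ of non-descendants of $W$ disjoint from $\Pa{W}$. I will show that $\Pa{W}$ d-separates $W$ from $ND$; the global SG-CMC then gives $W \indep ND \mid \Pa{W}$.
\begin{enumerate}
\item Take a path from $W$ whose first edge points into $W$. The first interior node $P$ is a parent of $W$ and emits an arrow on the path, so $P$ is a non-collider in the conditioning set. The path is blocked.
\item Take a path whose first edge is $W \to C$. Its endpoint in $ND$ is not a descendant of $W$, so the path cannot be directed all the way. Hence some node on it is a collider; let $M$ be the first one.
\item $M$ and all its descendants are descendants of $W$. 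By acyclicity none of them lies in $\Pa{W}$, so the collider $M$ blocks the path.
\end{enumerate}

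\textbf{Local $\Rightarrow$ global.} First, the local SG-CMC plus Decomposition gives the ordered Markov property: for any topological order and any $W$, $W \indep \mathrm{pred}(W)\setminus \Pa{W} \mid \Pa{W}$.

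I then induct on $|V|$. Pick a sink $v$ (a node with no children) and let $G' = G - v$. The local SG-CMC holds on $G'$ for the restricted relation, since non-descendant sets in $G'$ are non-descendant sets in $G$ and parent sets are unchanged. Now suppose $Z$ d-separates $X$ from $Y$ in $G$. There are three cases.
\begin{enumerate}
\item $v \notin X \cup Y \cup Z$. Every path through $v$ has $v$ as a collider with no descendants in $Z$, so it is blocked. Hence the d-separation already holds in $G'$, and the induction hypothesis gives the independence.
\item $v \in X$, say $X = X' \cup \{v\}$ (the case $v \in Y$ follows by Symmetry).
\begin{enumerate}
\item Since $X'$ and $Y$ are d-separated by $Z$ in $G$ and $v$ is a sink outside $Z$, we get $X' \indep Y \mid Z$ in $G'$ by the previous case.
\item Separation forces $\Pa{v} \cap Y = \emptyset$. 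Put $Z_1 = \Pa{v} \setminus (X' \cup Z)$. Every node of $Z_1$ is adjacent to $v$, so it is d-separated from $Y$ by $X' \cup Z$ in $G'$. The induction hypothesis then gives $Z_1 \indep Y \mid X' \cup Z$.
\item The ordered Markov property gives $v \indep Y \mid \Pa{v} \cup X' \cup Z$, after Weak Union on $v \indep V \setminus (\Pa{v}\cup\{v\}) \mid \Pa{v}$.
\item Contraction of the last two statements gives $Y \indep \{v\} \cup Z_1 \mid X' \cup Z$. Decomposition then gives $Y \indep v \mid X' \cup Z$.
\item A final Contraction with $Y \indep X' \mid Z$ yields $Y \indep X \mid Z$, and Symmetry finishes.
\end{enumerate}
\item $v \in Z$.
\end{enumerate}

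\textbf{Main obstacle.} The difficult step is case 3, $v \in Z$. Conditioning on the sink $v$ activates collider paths through $v$ and through its ancestors. So the d-separation does not transfer directly to $G'$ with conditioning set $Z \setminus \{v\}$.
\begin{enumerate}
\item First, partition $X$ and $Y$ according to which side of $\Pa{v}$ they fall on. Separation in $G$ implies that $\Pa{v}$ cannot have ancestors-via-open-paths to both $X$ and $Y$ simultaneously.
\item Second, apply the induction hypothesis in $G'$ with $\Pa{v}$ added to the conditioning set.
\item Third, use $v \indep (\text{rest}) \mid \Pa{v} \cup (\text{rest})$ together with Contraction and Weak Union to put $v$ back and then remove the auxiliary parents.
\end{enumerate}
If this bookkeeping becomes unmanageable, the fallback is the Verma--Pearl argument. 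It shows that the semi-graphoid closure of the ordered boundary statements $\{W \indep \mathrm{pred}(W)\setminus\Pa{W} \mid \Pa{W}\}$ contains every d-separation statement; this is exactly the content of Theorem 2.44 in \cite{Lauritzen2019}, which the paper invokes here.
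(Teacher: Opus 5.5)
Note first that the paper gives no proof of this statement; it imports it from Theorem 2.44 of \cite{Lauritzen2019}.

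Your global-to-local direction and cases 1 and 2 of the induction are sound, with two small fixes:
\begin{itemize}
\item In 2(b) the reason is not mere adjacency. You need $v$ d-separated from $Y$ by $X'\cup Z$ in $G$, which is weak union for d-separation. Then you prepend the edge $p\to v$, on which $p$ is a non-collider outside $X'\cup Z$.
\item In 2(c) you need Decomposition before Weak Union.
\end{itemize}

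The genuine gap is case 3, $v\in Z$. This is where the real content lies, and you leave it open. Your sketched plan would not close it as stated. Suppose you condition on $W=\Pa{v}\setminus(X\cup Y\cup Z)$ and reach $X\indep Y\mid Z\cup W$. The semi-graphoid axioms let you drop $W$ only with an extra statement such as $X\indep W\mid Z$ or $Y\indep W\mid Z$. That is again a statement on the full graph $G$ with $v$ in the conditioning set, i.e.\ an instance of case 3 itself. The induction hypothesis on $G'$ does not supply it. Falling back on the citation is legitimate, since it is exactly what the paper does, but then your partial induction does no work.

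The missing idea is to attach the free parents of $v$ to one \emph{side}, not to the conditioning set.
\begin{enumerate}
\item Write $Z'=Z\setminus\{v\}$ and $P=\Pa{v}\setminus Z'$. Any path of $G'$ that is active given $Z'$ is active in $G$ given $Z$. Hence $X$ and $Y$ are d-separated by $Z'$ in $G'$.
\item Because $v\in Z$, every $p\in P$ is passable on any path: as a collider it has the descendant $v\in Z$, and as a non-collider it lies outside $Z$. Also, any two $p,q\in P$ are joined by the active segment $p\to v\leftarrow q$.
\item So it cannot happen that some element of $P$ lies in, or is d-connected (in $G'$ given $Z'$) to, $X$ while some element lies in, or is d-connected to, $Y$. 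Concatenation would give an active walk, hence an active path (e.g.\ via the moralization criterion), from $X$ to $Y$ given $Z$ in $G$.
\item Say no element of $P$ lies in or is connected to $X$. Then $X$ and $Y\cup P$ are d-separated by $Z'$ in $G'$, and the induction hypothesis gives $X\indep Y\cup P\mid Z'$.
\item The local SG-CMC at the sink $v$, with Decomposition and Weak Union, gives $v\indep X\mid Y\cup P\cup Z'$. This uses $\Pa{v}\subseteq P\cup Z'$ and $X\cap\Pa{v}=\emptyset$.
\item Symmetry and Contraction give $X\indep Y\cup P\cup\{v\}\mid Z'$. Weak Union, moving $v$ into the conditioning set, then Decomposition yield $X\indep Y\mid Z$.
\end{enumerate}
The other alternative, where no element of $P$ lies in or is connected to $Y$, is symmetric.
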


Next, we need two useful lemmas. Define $G^{\text{Rev}}$ as the DAG obtained from $G$ by reversing every edge. We then have the following results:

\begin{lemma}[Local v-CMC/CMC duality]\label{LocalDuality}
Let $G$ and $G^{\text{Rev}}$ be defined as above and let $u$ be a utility function defined on subsets of $V$, then $u$ and $G$ satisfy the local v-CMC if and only if $u$ and $G^{\text{Rev}}$ obey the local SG-CMC.
\end{lemma}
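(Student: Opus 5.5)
The plan is to show that the two local conditions are literally the same family of independence statements, using the fact that reversing every edge swaps parents with children and descendants with ancestors. Both sides refer to the same independence relation $\indep_u$ induced by $u$; only the graph changes. So it suffices to check that, for each node $W$, the conditioning set and the admissible range of the other set agree on the two sides.

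The first step is to record the elementary correspondences under edge reversal. The parents of $W$ in $G^{\text{Rev}}$ are exactly the children of $W$ in $G$: $\mathrm{Pa}_{G^{\text{Rev}}}(W)=\Ch{W}_G$. A directed path $W\to\cdots\to Y$ in $G^{\text{Rev}}$ is, read backwards, a directed path $Y\to\cdots\to W$ in $G$. Hence the descendants of $W$ in $G^{\text{Rev}}$ are the ancestors of $W$ in $G$. It follows that the non-descendants of $W$ in $G^{\text{Rev}}$ are the non-ancestors of $W$ in $G$.

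The second step handles the exclusions. In the local SG-CMC, the set $ND$ ranges over non-descendants of $W$ in $G^{\text{Rev}}$, excluding $W$ itself and its $G^{\text{Rev}}$-parents. In the local v-CMC, $N$ ranges over $\NA{W}$, which is the non-ancestors of $W$ in $G$ excluding $W$ and $\Ch{W}_G$. By the first step these two ranges are the same collection of subsets. One small point to check is that the children of $W$ in $G$ are indeed non-ancestors of $W$, so that excluding them on one side matches excluding the parents on the other. This holds by acyclicity, which is preserved under reversal, so $G^{\text{Rev}}$ is again a DAG.

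The third step reads off the equivalence node by node. For each $W$, the statement $W \indep_u N \mid \Ch{W}_G$ for all admissible $N$ is syntactically identical to $W \indep_u ND \mid \mathrm{Pa}_{G^{\text{Rev}}}(W)$ for all admissible $ND$. Quantifying over all $W\in V$ then gives both directions at once. I do not expect a real obstacle here. The only care needed is the bookkeeping on the excluded sets, which is the second step. I would also note the standing caveat that both conditions are asserted only on subsets where $u$ is defined; the domain of $u$ is the same on both sides, so this qualification transfers unchanged.
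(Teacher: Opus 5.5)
Your proposal is correct and follows essentially the same route as the paper. The paper's proof likewise just observes that $\Ch{W}$ in $G$ is $\mathrm{Pa}_{G^{\text{Rev}}}(W)$, and that the non-descendants of $W$ in $G^{\text{Rev}}$ (excluding its parents there) are the non-ancestors of $W$ in $G$ (excluding its children), so the two local conditions are the same family of independence statements. Your extra bookkeeping (path reversal, acyclicity of $G^{\text{Rev}}$) only spells out what the paper calls immediate.
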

\begin{proof}
    This is immediate by the definition of $G^{\text{Rev}}$ since $\Ch{W}$ in $G$ is $\Pa{W}$ in $G^{\text{Rev}}$, and the set of non-descendants of $W$ (excluding $W$'s parents) in $G^{\text{Rev}}$ is the set of non-ancestors of $W$ (excluding $W$'s children) in $G$.
\end{proof}

\begin{lemma}[$v$-separation/d-separation duality]\label{UDDuality}
Let $G$ and $G^{\text{Rev}}$ be defined as above, then: 

$A$ and $B$ are $v$-separated by $C$ in $G$ $\iff$ $A$ and $B$ are d-separated by $C$ in $G^{\text{Rev}}$.
\end{lemma}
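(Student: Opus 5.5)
The plan is to prove Lemma~\ref{UDDuality} by a path-by-path comparison. Paths in $G$ and $G^{\text{Rev}}$ coincide as sequences of nodes, since reversing every edge preserves adjacency. It therefore suffices to show that a fixed path $\pi$ is blocked by $C$ in the $v$-separation sense in $G$ if and only if it is blocked by $C$ in the $d$-separation sense in $G^{\text{Rev}}$. Quantifying over all paths between $A$ and $B$ then gives the equivalence.

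To compare blocking, I classify each interior node $M$ of $\pi$ by the orientation of its two incident edges in $G$. Each case is matched with its image in $G^{\text{Rev}}$ against the corresponding clause of $d$-separation:
\begin{enumerate}
\item A chain $\cdot\to M\to\cdot$ in $G$ is a chain $\cdot\leftarrow M\leftarrow\cdot$ in $G^{\text{Rev}}$. Both criteria block at $M$ exactly when $M\in C$.
\item A common effect $\cdot\to M\leftarrow\cdot$ in $G$ becomes a fork $\cdot\leftarrow M\to\cdot$ in $G^{\text{Rev}}$. The $v$-separation clause blocks iff $M\in C$, which is exactly the $d$-separation rule for a fork.
\item A common cause $\cdot\leftarrow M\to\cdot$ in $G$ becomes a collider $\cdot\to M\leftarrow\cdot$ in $G^{\text{Rev}}$. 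Here $d$-separation blocks iff neither $M$ nor any \emph{descendant} of $M$ in $G^{\text{Rev}}$ lies in $C$. The descendants of $M$ in $G^{\text{Rev}}$ are precisely the ancestors of $M$ in $G$, because directed paths reverse. This matches clause (3) of $v$-separation.
\end{enumerate}
Since a path is blocked iff some interior node blocks it under its respective rule, and the rules agree node by node, the blocking statuses agree.

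The only delicate step is the third case, specifically the ancestor/descendant swap. I will check it explicitly: $M\to\cdots\to N$ in $G^{\text{Rev}}$ holds iff $N\to\cdots\to M$ in $G$. I will also make sure the reading of the informal $v$-separation definition is the intended one, namely that clauses (1)–(3) refer to interior nodes of the path with arbitrary endpoints rather than literally $X$ and $Y$. Once that reading is fixed, the argument is a routine case check.
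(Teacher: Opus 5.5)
Your proposal is correct and follows essentially the same route as the paper: a case-by-case check that each $v$-separation blocking clause in $G$ becomes the corresponding $d$-separation clause in $G^{\text{Rev}}$. Chains map to chains, common effects to forks, and common causes to colliders, with ancestors in $G$ becoming descendants in $G^{\text{Rev}}$. Your explicit remarks that reversal preserves the set of paths and that the clauses concern interior nodes make the argument slightly more careful than the paper's, but do not change its substance.
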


\begin{proof}
The proof just amounts to verifying that translating the conditions for $v$-separation using the translation key in \ref{tab:translation_key} gives $d$-separation on $G^{\text{Rev}}$. By definition, $v$-separation in $G$ blocks paths based on three conditions:

\begin{itemize}
    \item Chains $A \to M \to B$ where $M \in C$.
    \item Common Effects $A \to M \leftarrow B$ where $M \in C$.
    \item Common Causes $A \leftarrow M \to B$ where neither $M$ nor its ancestors are in $C$.
\end{itemize}

In $G^{\text{Rev}}$, all edges are reversed, so:
\begin{itemize}
    \item Chains become chains $A \leftarrow M \leftarrow B$, blocked if $M \in C$.
    \item Common Effects become Common Causes $A \leftarrow M \to B$, blocked if $M \in C$.
    \item Common Causes become Common Effects $A \to M \leftarrow B$, blocked if neither $M$ nor its descendants (which were ancestors in $G$) are in $C$.
\end{itemize}

This mapping shows that $v$-separation in $G$ corresponds exactly to the definition of $d$-separation in $G^{\text{Rev}}$, as desired.
\end{proof}

We are now in a position to prove part 1 of Theorem \ref{thm:EquivalenceTheorem}. 

\noindent \textbf{Global $\leftrightarrow$ Local}

\begin{proof}

 Note that, by Lemma \ref{LocalDuality}, $\indep_u$ and $G$ jointly obey the local v-CMC if and only if $\indep_u$ and $G^{\text{Rev}}$ jointly obey the semi-graphoid version of the local CMC for DAGs. Similarly, by Lemma \ref{UDDuality} $\indep_u$ and $G$ jointly obey the global v-CMC if and only if $\indep_u$ and $G^{\text{Rev}}$ jointly obey the semi-graphoid version of the global CMC for DAGs. Hence, since the local and global versions of the CMC are equivalent by Theorem \ref{semigraphoidCMC}, it follows that the local and global versions of the v-CMC are equivalent as well.

\end{proof}

Next, we prove the second part of Theorem \ref{thm:EquivalenceTheorem}:

\noindent \textbf{Local $\rightarrow$ Decomposition}

\begin{proof}
Let $G=(V,E)$ be the DAG and order the vertices $(Y_1,\dots,Y_n)$ so that if $Y_i \to Y_j$ then $i<j$ (parents precede children). In particular, $Y_1$ is a root (it has no parents). For brevity write $Y_{>i}:=\{Y_{i+1},\dots,Y_n\}$. Let $\Ch{Y_i}$ denote the children of $Y_i$ (in $G$), and let $\nCh{Y_{>i}} := Y_{>i}\setminus \Ch{Y_i}$. Note that under this ordering, $\nCh{Y_{>i}}$ contains no ancestors of $Y_i$. Moreover, $Y_{>i}=\Ch{Y_i}\cup \nCh{Y_{>i}}$. Because $\nCh{Y_{>i}}$ contains no ancestors of $Y_i$, the local v-CMC implies that $u(Y_i \mid \Ch{Y_i}, \nCh{Y_{>i}}) = u(Y_i \mid \Ch{Y_i})$, for all $Y_i$.

We now prove the statement by induction. If $G$ has one vertex, then $u(Y_1) = u(Y_1 \mid \emptyset) = u(Y_1 \mid \Ch{Y_1})$, and hence the decomposition property holds (vacuously). Now suppose the decomposition property holds for all graphs with fewer than $n$ vertices. Note that this assumption immediately entails that every proper subgraph of $G$ has the decomposition property. Hence, it suffices to show that $G$ itself can be decomposed into a sum of local conditional value terms. Note that the vertices $\{Y_2, \ldots, Y_n\}$ and edges between these vertices form a graph $G^{'}$. Let us use $\Ch{Y_i}_{G^{'}}$ to denote the children of $Y_i$ in the graph $G^{'}$. Our ordering guarantees that $Y_1$ is not a child of any $Y_i$. Therefore, $\Ch{Y_i}_{G^{'}} = \Ch{Y_i}_{G}$ for every $Y_i$, which allows us to drop subscripts and write $\Ch{Y_i}$. 

Using the definition of conditional value (\ref{conditional_value2}), we can write:

\begin{align}
u(Y_1, \ldots, Y_n) \\
= u(Y_1 \mid Y_2, \ldots, Y_n) + u(Y_2, \ldots, Y_n) \\
= u(Y_1 \mid \Ch{Y_1}, \nCh{Y_{>1}}) + u(Y_2, \ldots, Y_n) \\
= u(Y_1 \mid \Ch{Y_1}) + u(Y_2, \ldots, Y_n) \\
= u(Y_1 \mid \Ch{Y_1}) + \sum_{i=2}^n u(Y_i \mid \Ch{Y_i}_{G^{'}}) \\
= \sum_{i=1}^n u(Y_i \mid \Ch{Y_i})
\end{align}

Where, in the penultimate step, we used v-CMC, and in the last step we used the induction hypothesis. Since $Y_i$ is just a relabeling of $X_i$ and $u$ is insensitive to arbitrary relabelings, we have:

\begin{gather}
u(X_1, \ldots, X_n) = u(Y_1, \ldots, Y_n) \\
= \sum_{i=1}^{n}u(Y_i \mid \Ch{Y_i}) =  \sum_{i=1}^{n}u(X_i \mid \Ch{X_i})
\end{gather}

This establishes that $G$ itself can be decomposed into local conditional terms, and hence completes the proof.
\end{proof}

\noindent \textbf{Decomposition $\rightarrow$ Local}

\begin{proof}

Fix a node $X_j\in V$. Let
\[
N_j := \mathrm{nAn}{X_j}_G\setminus \Ch{X_j}_G,
\]
i.e.\ the set of non-ancestors of $X_j$ in $G$ that are not children of $X_j$.
Let $T := \{X_j\}\cup N_j$ and let $S$ be the (induced) subgraph obtained by closing $T$ under children:
starting from $T$, iteratively add $\Ch{X}$ for every $X$ already included, until no new nodes are added.
By construction, $S$ is child-closed.

Moreover, $X_j$ has no ancestors in $S$: all nodes added to $T$ are descendants (via child-closure) of nodes in $T$,
so no directed path into $X_j$ can be created inside $S$. Hence $S\setminus\{X_j\}$ is also child-closed, because if some
$Y\in S\setminus\{X_j\}$ had $X_j$ as a child then $Y$ would be an ancestor of $X_j$ in $S$, a contradiction.

Since $G$ has the decomposition property, it applies to the child-closed subgraphs $S$ and $S\setminus\{X_j\}$:

\[
u(S) = \sum_{X_i\in S} u(X_i\mid \Ch{X_i}_G),\qquad
u(S\setminus\{X_j\}) = \sum_{X_i\in S\setminus\{X_j\}} u(X_i\mid \Ch{X_i}_G).
\]

Subtracting gives

\[
u(S)-u(S\setminus\{X_j\}) = u(X_j\mid \Ch{X_j}_G).
\]

By the definition of conditional value, the left-hand side equals $u(X_j\mid S\setminus\{X_j\})$, so
\[
u(X_j\mid S\setminus\{X_j\}) = u(X_j\mid \Ch{X_j}_G).
\]

Finally, because $S$ is child-closed we have $\Ch{X_j}_S=\Ch{X_j}_G$, and by construction
$S\setminus\{X_j\} = \Ch{X_j}_G\cup N_j$. Therefore

\[
u(X_j\mid \Ch{X_j}_G\cup N_j)=u(X_j\mid \Ch{X_j}_G).
\]

By the semigraphoid decomposition axiom, this implies that for every $n\subseteq N_j$,

\[
u(X_j\mid \Ch{X_j}_G\cup n)=u(X_j\mid \Ch{X_j}_G),
\]

which is exactly the local v-CMC for $X_j$.

\end{proof}

\section{Proof of completeness}

Here is the theorem we wish to prove:

\SoundnessCompleteness

\noindent 

\begin{proof}
    
Since the main document contains the proof of soundness, here we just prove completeness.

\paragraph{(i) \textbf{Completeness}.}
Suppose $X$ and $Y$ are not $v$-separated by $Z$ in $G$. Then $X$ and $Y$ are not $d$-separated by $Z$ in the arrow-reversed graph $G^{\text{Rev}}$. By the completeness of $d$-separation \citep{VermaPearl1988, GeigerVermaPearl1990}, there exists a probability distribution $P$ that factorizes with respect to $G^{\text{Rev}}$ such that

\[
X \not\indep_P Y \mid Z.
\]

Furthermore, $P$ can be chosen strictly positive:
the subset yielding unfaithful distributions has Lebesgue measure zero \citep{Meek1995},
so a strictly positive faithful distribution exists and therefore violates $X \indep Y \mid Z$.

Now define a value function $u$ on (joint) realizations of subsets of $V$ by

\begin{equation}\label{eq:u_logP_def}
u(S) \;:=\; \log P(S),
\end{equation}

for every set of variables $S \subseteq V$ and every realization $s$ of $S$ (we suppress explicit realization notation), which is well-defined since $P$ is strictly positive. With conditional value defined by subtraction (\ref{conditional_value2}), we then have, for all disjoint $S_1,S_2 \subseteq V$,

\begin{equation}\label{eq:cond_logP}
u(S_1 \mid S_2) \;=\; u(S_1 \cup S_2) - u(S_2)
\;=\; \log P(S_1 \cup S_2) - \log P(S_2)
\;=\; \log P(S_1 \mid S_2).
\end{equation}

It follows immediately that conditional value independence with respect to $u$ coincides with probabilistic conditional independence with respect to $P$: for all disjoint $S_1,S_2,S_3 \subseteq V$,

\begin{align}\label{eq:indep_equiv}
S_1 \indep_u S_2 \mid S_3
\quad\Longleftrightarrow\quad
u(S_1 \mid S_2, S_3) = u(S_1 \mid S_3) \\
\quad\Longleftrightarrow\quad
\log P(S_1 \mid S_2, S_3) = \log P(S_1 \mid S_3)
\quad\Longleftrightarrow\quad
S_1 \indep_P S_2 \mid S_3,
\end{align}

where all equalities are understood pointwise for all realizations. In particular, since $X \not\indep_P Y \mid Z$, we obtain

\[
X \not\indep_u Y \mid Z.
\]

Finally, since $P$ factorizes with respect to $G^{\text{Rev}}$, for every \emph{parent-closed} vertex set $V' \subseteq V$ in $G^{\text{Rev}}$ we have

\begin{equation}\label{eq:factorization_Grev}
P(V') \;=\; \prod_{w \in V'} P\bigl(w \mid \Pa{w}_{G^{\text{Rev}}}\bigr).
\end{equation}

Taking logs and using (\ref{eq:u_logP_def})--(\ref{eq:cond_logP}) yields

\begin{equation}\label{eq:log_factorization}
u(V') \;=\; \sum_{w \in V'} \log P\bigl(w \mid \Pa{w}_{G^{\text{Rev}}}\bigr)
\;=\; \sum_{w \in V'} u\bigl(w \mid \Pa{w}_{G^{\text{Rev}}}\bigr).
\end{equation}

Now note that $\Pa{w}_{G^{\text{Rev}}} = \Ch{w}_G$ for every node $w$, and that $V'$ is parent-closed in $G^{\text{Rev}}$ iff $V'$ is child-closed in $G$. Hence (\ref{eq:log_factorization}) is exactly the decomposition v-CMC on $G$:

\[
u(V') \;=\; \sum_{w \in V'} u\bigl(w \mid \Ch{w}_G\bigr),
\]

for every child-closed $V' \subseteq V$.

We have thus constructed a value function $u$ that obeys the decomposition v-CMC on $G$ and for which $X \not\indep_u Y \mid Z$. This completes the proof of Completeness.

\end{proof}

\section{Proof of the Bellman type decomposition}

We start by restating the theorem:

\Bellman*

\begin{proof}

Note that by the definition of conditional value (\ref{conditional_value2}), we can write:

\begin{equation}\label{eq:Bellman1}
    u(V_i) = u(P_i \cup V_{i-1}) = u(P_i \mid V_{i-1}) + u(V_{i-1}) 
\end{equation}

Note that, by construction, $V_{i-1}$ contains all the children of $P_i$, which allows us to write $V_{i-1} = \Ch{P_i} \bigcup R$, where $\Ch{P_i}$ is the set of all the children of nodes in $P_i$ and $R$ is the remainder of nodes in $V_{i-1}$ that are not in $\Ch{P_i}$. Note that, again by construction, $R$ consists of non-ancestor nodes of $P_i$. $\Ch{P_i}$ therefore $v$-separates $P_i$ from $R$ and the (global) v-CMC implies $u(P_i \mid \Ch{P_i} \bigcup R) = u(P_i \mid \Ch{P_i})$. Thus, we can write (\ref{eq:Bellman1}) as follows:

\begin{equation}\label{eq:Bellman2}
    u(V_i) = u(P_i \mid \Ch{P_i}) + u(V_{i-1}) 
\end{equation}

Next, we will show that the following decomposition holds for any set $S$ that has the property that no node in $S$ is an ancestor of any other node in $S$:

\begin{equation}\label{eq:conditional_decomposition}
    u(S \mid \Ch{S}) = \sum_{W \in S} u(W \mid \Ch{W}),
\end{equation} 

which we do by induction on the number of nodes in $S$. If $S$ has one node, (\ref{eq:conditional_decomposition}) holds vacuously. Now suppose (\ref{eq:conditional_decomposition}) holds for all sets with fewer than $n$ nodes and suppose $S$ has $n$ nodes. Pick one of the nodes $W \in S$ and write $S':= S \setminus \set{W}$. $S'$ has $n-1$ nodes and hence satisfies (\ref{eq:conditional_decomposition}). So we can write:

\begin{align}\label{eq:conditional_decomposition2}
    u(S \mid \Ch{S}) = u(\set{W} \cup S' \mid \Ch{W} \cup \Ch{S'}) \\
    = u(\set{W} \cup \Ch{W} \cup S' \cup \Ch{S'}) - u(\Ch{W} \cup \Ch{S'}) \\
    = u(\set{W} \mid \Ch{W} \cup S' \cup \Ch{S'}) + u(\Ch{W} \cup S' \cup \Ch{S'}) - u(\Ch{W} \cup \Ch{S'}) 
\end{align} 

 Since $W$ has no ancestors in $S$, the v-CMC implies that $u(\set{W} \mid \Ch{W} \cup S' \cup \Ch{S'}) = u(\set{W} \mid \Ch{W}) = u(W \mid \Ch{W})$, so (\ref{eq:conditional_decomposition2}) becomes:

 \begin{align}\label{eq:conditional_decomposition3}
    u(S \mid \Ch{S}) = u(W \mid \Ch{W}) \\ 
    + u(\Ch{W} \cup S' \cup \Ch{S'}) - u(\Ch{W} \cup \Ch{S'}) 
    \\ = u(W \mid \Ch{W}) + u(S' \mid \Ch{S'} \cup \Ch{W})
\end{align} 

Again, by assumption $W$ cannot be an ancestor of any node in $S$, so the v-CMC implies that $u(S' \mid \Ch{S'} \cup \Ch{W}) = u(S' \mid \Ch{S'})$. We can therefore write: 

 \begin{align}\label{eq:conditional_decomposition4}
    u(S \mid \Ch{S}) = u(W \mid \Ch{W}) + u(S' \mid \Ch{S'}) \\
    u(W \mid \Ch{W}) + \sum_{W \in S'}u(W \mid \Ch{W}) \\
    = \sum_{W \in S}u(W \mid \Ch{W}),
\end{align} 

which completes the inductive proof of (\ref{eq:conditional_decomposition}). Now note that, by construction, $P_i$ is a set such that no node in $P_i$ is an ancestor of any other node in $P_i$, because if $W$ is an ancestor of $W'$, then the height of $W$ will be greater than the height of $W'$ and every node in $P_i$ has height $i$. Thus, combining (\ref{eq:Bellman2}) and (\ref{eq:conditional_decomposition}) yields the desired Bellman decomposition:

\begin{equation}
      u(V_i) = u(P_i \mid \Ch{P_i}) + u(V_{i-1}) = \sum_{W \in P_i}u(W \mid \Ch{W}) + u(V_{i-1}),
\end{equation}

which completes the proof.
    
\end{proof}

\section{A worked example of $v$-separation and v-CMC decomposition under interventions}\label{sec:example}

The purpose of this section is to illustrate how $v$-separation reduces the conditioning set required for local conditional-value assessments, and how the decomposition v-CMC permits utility functions to be updated modularly after interventions.

Consider the antibiotic DAG in Figure~\ref{figure1}. Let $G^A := G_{\operatorname{do}(A)}$ denote the intervened-upon graph in which the arrow $D\to A$ is removed, and let $u^A$ be its associated utility function. For convenience, the graph is shown in Figure~\ref{figure3}.

\begin{figure}[ht]
\centering
\begin{adjustbox}{width=\linewidth}
\begin{tikzpicture}[
  x=1cm,y=1cm,
  >=Latex,
  font=\small,
  node distance=10mm,
  var/.style={draw,rounded corners,align=center,minimum width=22mm,minimum height=6mm,inner sep=2.5pt},
  choice/.style={var,very thick,fill=black!10},
  child/.style={var,fill=blue!10},
  exogen/.style={var,fill=gray!15}
]

\node[choice] (A) at (0,0) {Antibiotic $A$\\(give vs.\ withhold)};

\node[child]  (S) at (3.4, 1.0) {Symptom relief $S$};
\node[child]  (E) at (3.4,-1.0) {Adverse reaction $E$};

\node[var]    (T) at (6.8,0)     {Length of stay $T$};
\node[var]    (W) at (9.8,0)     {Patient welfare $W$};

\node[exogen] (D) at (-2.4, 1.2) {Diagnostics $D$};
\node[exogen] (I) at ( 1.2, 2.2) {Infection severity $I$};
\node[exogen] (R) at ( 1.2,-2.2) {Risk factors $R$};

\draw[->] (A) -- (S);
\draw[->] (A) -- (E);

\draw[->] (I) -- (S);
\draw[->] (R) -- (E);

\draw[->] (S) -- (T);
\draw[->] (E) -- (T);
\draw[->] (T) -- (W);
\draw[->] (S) -- (W);
\draw[->] (E) -- (W);

\node[draw,dashed,rounded corners,fit=(S)(E),inner sep=3pt,label=below:{\footnotesize children of $A$}] (childbox) {};
\end{tikzpicture}
\end{adjustbox}
\caption{Post-intervention graph $G^A$: $A$ affects $S,E$, which
affect outcomes $T,W$; $D$ is disconnected and $I,R$ are exogenous.}
\label{figure3}
\end{figure}

Suppose first that we are interested in assessing the conditional value of administering the antibiotic, $A=1$, i.e., $u^A(A \mid S,E,D,I,R,T,W)$. In $G^A$, the variable $D$ is disconnected from $A$. Moreover, conditioning on $S$ and $E$ blocks the common-effect paths $A\to S\leftarrow I$ and $A\to E\leftarrow R$, and blocks every path from $A$ to $T$ or $W$. Thus, $A$ is $v$-separated from the remaining variables by its children:

\begin{equation}
A \indep {D,I,R,T,W} \mid {S,E}.
\label{eq:antibiotic-vsep-do-a}
\end{equation}

Thus, by the global v-CMC,

\begin{equation}
u^A(A \mid S,E,D,I,R,T,W) = u^A(A \mid S,E).
\label{eq:antibiotic-local-do-a}
\end{equation}

Hence, although the original graph contains eight variables, the local conditional-value assessment of administering the antibiotic requires only that we condition on the pair $(S,E)$. This illustrates how $v$-separation can simplify local conditional value assessments.

Suppose next that we want to evaluate the \emph{total} utility $u^A(A, S, E, T, W)$ on the sub-graph over the variables $\{A, S, E, T, W\}$. Note that this set is child-closed in $G^A$, and hence the decomposition v-CMC allows us to simplify as follows:

\begin{equation}
\begin{aligned}
u^A(A,S,E,T,W)
={}&u^A(W)+u^A(T\mid W) \\
&+u^A(S\mid T,W)+u^A(E\mid T,W) \\
&+u^A(A\mid S,E).
\end{aligned}
\label{eq:antibiotic-decomposition-do-a}
\end{equation}

For a numerical illustration, suppose that

\begin{equation}
\begin{aligned}
u^A(W)&=100W,
&
u^A(T\mid W)&=-10T, \\
u^A(S\mid T,W)&=10S,
&
u^A(E\mid T,W)&=-30E.
\end{aligned}
\label{eq:antibiotic-local-terms}
\end{equation}

Thus, high patient welfare has utility $100$ and low welfare has utility $0$, a long stay has disutility $10$ and a short one has a utility of $0$, and so on. Suppose, furthermore, that the local conditional value contribution of administering the antibiotic has the following slightly more complex structure:

\begin{equation}
\begin{array}{c|cccc}
(S,E) & (0,0) & (1,0) & (0,1) & (1,1) \\ \hline
u^A(A=1\mid S,E) & -8 & 5 & -15 & -5 .
\end{array}
\label{eq:antibiotic-a-term-do-a}
\end{equation}

Thus, for example, the conditional utility of administering the antibiotic is $5$ when the patient has symptom relief and no adverse reaction, whereas it is $-15$ when the patient has symptom relief and an adverse reaction.

Given these local utilities, we can calculate the total joint utility for any combination of values of $\{A,S,E,T,W\}$. For example, the total utility of administering the antibiotic when the patient has symptom relief, an adverse reaction, a short length of stay, and ultimately high welfare is:

\begin{equation}
\begin{aligned}
u^A(A=1,S=1,E=1,T=0,W=1)
&=100+0+10-30-5 \\
&=75.
\end{aligned}
\label{eq:antibiotic-example-do-a}
\end{equation}

Next, let us consider how the $v$-CMC can be used to assess how the total utility changes under a further intervention. Suppose, in particular, that we intervene to prevent adverse reactions, for example by administering a prophylactic. This removes the arrows
into $E$ and produces the new graph $G^{AE}:=G_{\operatorname{do}(A),\operatorname{do}(E=0)}$, shown in Figure~\ref{figure4}.

\begin{figure}[ht]
\centering
\begin{adjustbox}{width=\linewidth}
\begin{tikzpicture}[
  x=1cm,
  y=1cm,
  >=Latex,
  font=\small,
  node distance=10mm,
  var/.style={
    draw,
    rounded corners,
    align=center,
    minimum width=22mm,
    minimum height=6mm,
    inner sep=2.5pt
  },
  choice/.style={var,very thick,fill=black!10},
  child/.style={var,fill=blue!10},
  fixed/.style={var,fill=green!10},
  exogen/.style={var,fill=gray!15}
]

\node[choice] (A) at (0,0)
  {Antibiotic $A$\\(give vs.\ withhold)};

\node[child] (S) at (3.4,1.0) {Symptom relief $S$};
\node[fixed] (E) at (3.4,-1.0) {Adverse reaction $E=0$};

\node[var] (T) at (6.8,0) {Length of stay $T$};
\node[var] (W) at (9.8,0) {Patient welfare $W$};

\node[exogen] (D) at (-2.4,1.2) {Diagnostics $D$};
\node[exogen] (I) at (1.2,2.2) {Infection severity $I$};
\node[exogen] (R) at (1.2,-2.2) {Risk factors $R$};

\draw[->] (A) -- (S);
\draw[->] (I) -- (S);

\draw[->] (S) -- (T);
\draw[->] (E) -- (T);
\draw[->] (T) -- (W);
\draw[->] (S) -- (W);
\draw[->] (E) -- (W);

\node[
  draw,
  dashed,
  rounded corners,
  fit=(S),
  inner sep=3pt,
  label=below:{\footnotesize child of $A$}
] {};
\end{tikzpicture}
\end{adjustbox}
\caption{Post-intervention graph $G^{AE}$: $A$ affects $S$, while
$E$ is fixed at $0$ by intervention.}
\label{figure4}
\end{figure}

The intervention on $E$ affects the child sets of both $R$ and $A$. However, $R$ is not part of the reduced child-closed set $\{A,S,E,T,W\}$ used above, and its local term is therefore not needed for the present analysis. Since we are concerned with the downstream
utility of $A$, and $E$ is fixed at $0$ and is no longer part of the minimal child-closed set containing $A$, we may also omit $E$ from the reduced representation. The v-CMC therefore gives:

\begin{equation}
\begin{aligned}
u^{AE}(A,S,T,W)
={}&u^A(W)+u^A(T\mid W) \\
&+u^A(S\mid T,W)+u^{AE}(A\mid S).
\end{aligned}
\label{eq:antibiotic-decomposition-do-ae}
\end{equation}

Here, the only local term whose form changes in the reduced representation after intervening on $E$ is $u^{AE}(A\mid S)$; the remaining local terms in the reduced representation are retained unchanged from $u^A$. Suppose we have the following conditional values for $u^{AE}(A\mid S)$:

\begin{equation}
u^{AE}(A=1\mid S=0)=-8,
\qquad
u^{AE}(A=1\mid S=1)=2.
\label{eq:antibiotic-a-s-term}
\end{equation}

Then we can, for example, calculate the total joint utility of $(A = 1, S = 1, T = 0, W = 1)$ by using equation~\ref{eq:antibiotic-decomposition-do-ae}:

\begin{equation}
\begin{aligned}
u^{AE}(A=1,S=1,T=0,W=1)
&=100+0+10+2 \\
&=112.
\end{aligned}
\label{eq:antibiotic-example-do-ae}
\end{equation}

Of course, the numerical values are purely illustrative. The substantive point is that under $\operatorname{do}(A)$, v-separation reduces the local assessment of $A$ to $u^A(A\mid S,E)$; after $\operatorname{do}(E=0)$, it reduces further to
$u^{AE}(A\mid S)$. The remaining local terms in the reduced representation can be transferred unchanged from $u^A$.

\end{document}